\newtheorem{thm}{Theorem}[section]
\newtheorem{lem}{Lemma}[section]
\def\[#1\]{\begin{align}#1\end{align}}
\newcommand{\dee}{\mathrm{d}}
\newcommand{\grad}{\nabla}
\DeclareMathOperator{\Bernoulli}{Bernoulli}
\DeclareMathOperator{\gammadist}{gamma}
\DeclareMathOperator{\betadist}{beta}
\DeclareMathOperator{\Dirichlet}{Dirichlet}
\DeclareMathOperator{\categorical}{categorical}
\DeclareMathOperator{\uniform}{Uniform}
\DeclareMathOperator{\shp}{shp}
\DeclareMathOperator{\rte}{rte}
\DeclareMathOperator{\GRGLAW}{GRG}
\newcommand{\EE}{\mathbb{E}}
\newcommand{\Lcal}{\mathcal{L}}
\newcommand{\card}[1]{\vert {#1} \vert}
\renewcommand{\Pr}{\mathbb{P}}
\newcommand{\given}{\mid}
\newcommand{\dist}{\ \sim\ }
\newcommand{\Indicator}[1]{\mathds{1}_{\{#1\}}}
\newcommand{\defas}{\vcentcolon=}  
\newcommand{\iid}{i.i.d.}
\newcommand{\discount}{\alpha}
\newcommand{\ubound}[1]{C_{#1}}
\newcommand{\conc}{c}
\newcommand{\edges}{\mathcal E}
\newcommand{\minibatch}{\mathcal B}
\newcommand{\gammacdf}{F}
\newcommand{\igammacdf}{\gammacdf^{-1}}
\newcommand{\totalmass}{\gamma}
\newcommand{\degree}[1]{D_{n,#1}}
\newcommand{\pto}{\stackrel{\mathbb P}{\to}}
\newcommand{\ratio}{M}
\newcommand{\pgf}[1]{F_{#1}}
\newcommand{\mpgf}[2]{F_{#1}^{(#2)}}
\newcommand{\ingf}[1]{Q_{#1}}
\definecolor{WowColor}{rgb}{.75,0,.75}
\definecolor{SubtleColor}{rgb}{0,0,.50}
\newcommand{\PROBLEM}[1]{\textcolor{WowColor}{ {\bf (!!)} {\bf #1}}}
\newcounter{margincounter}
\newcolumntype{M}[1]{>{\centering\arraybackslash}m{#1}}
\icmltitlerunning{Power law simple graphs}
\begin{document} 

\twocolumn[
\icmltitle{Bayesian inference on random simple graphs\\
		with power law degree distributions}




\begin{icmlauthorlist}
\icmlauthor{Juho Lee}{pos}
\icmlauthor{Creighton Heaukulani}{cam}
\icmlauthor{Zoubin Ghahramani}{cam,ub}
\icmlauthor{Lancelot F. James}{ust}
\icmlauthor{Seungjin Choi}{pos}
\end{icmlauthorlist}

\icmlaffiliation{pos}{Pohang University of Science and Technology, Pohang, South Korea}
\icmlaffiliation{cam}{University of Cambridge, Cambridge, UK}
\icmlaffiliation{ub}{Uber AI Labs, San Francisco, CA, USA}
\icmlaffiliation{ust}{Hong Kong University of Science and Technology, Hong Kong}

\icmlcorrespondingauthor{Juho Lee}{stonecold@postech.ac.kr}
\icmlcorrespondingauthor{Seungjin Choi}{seungjin@postech.ac.kr}

\icmlkeywords{relational data, network models, scale-free random graphs, variational inference}

\vskip 0.3in
]



\printAffiliationsAndNotice{}  

\begin{abstract} 
We present a model for random simple graphs with power law (i.e., heavy-tailed) degree distributions. 
To attain this behavior, the edge probabilities in the graph are constructed from Bertoin--Fujita--Roynette--Yor (BFRY) random variables, 
which have been recently utilized in Bayesian statistics for the construction of power law models in several applications.
Our construction readily extends to capture the structure of latent factors, similarly to stochastic blockmodels, while maintaining its power law degree distribution.
The BFRY random variables are well approximated by gamma random variables in a variational Bayesian inference routine, which we apply to several network datasets for which power law degree distributions are a natural assumption.
By learning the parameters of the BFRY distribution via probabilistic inference, we are able to automatically select the appropriate power law behavior from the data.
In order to further scale our inference procedure, we adopt stochastic gradient ascent routines where the gradients are computed on minibatches (i.e., subsets) of the edges in the graph.
\end{abstract} 

\section{Introduction}
\label{sec:intro}

In statistical applications, random graphs serve as Bayesian models for network data, that is, data consisting of objects and the observed linkages between them.
Here we will focus on models for random \emph{simple graphs} (that is, graphs with edges that take binary values), which are appropriate for applications where we observe either the presence or absence of links between objects in the network.
For example, in social networks, nodes may represent individuals and a link (i.e., a nonzero value of an edge) could represent friendship. 
In a protein-protein interaction network, nodes may represent proteins and links could represent an observed physical or chemical interaction between proteins.
Many domains involving network data (including social and protein-protein interaction networks) have been shown to exhibit power law, i.e., heavy-tailed, degree distributions \citep{barabasi1999emergence}. 
Models for random graphs with power law degree distributions, also called \emph{scale-free} random graphs, have therefore become one of the most actively studied areas of graph theory and network science \citep{bollobas2001degree,albert2002statistical,dorogovtsev2002evolution}. 
In this paper we present a model for simple, scale-free random graphs, which we apply as a probabilistic model for several network datasets.

The model we present in this paper is a special case of the \emph{generalized random graph} defined by \citet{britton2006generating}, and studied further by \citet[Ch.~6]{van2016random}, which outlines a framework for defining scale-free random graphs, but does not provide practical constructions, much less algorithms for performing statistical inference on the model components given data. 
Here we provide one such practical construction, along with a \emph{variational inference} routine \citep{jordan1999introduction} for efficient posterior inference. 
What's more, our construction readily generalizes to include the structure of latent factors/clusters, as captured by the popular \emph{stochastic blockmodels} \citep{nowicki2001estimation,airoldi2009mixed}, while maintaining power law behavior in the graph.
%

Applying Bayesian inference algorithms on network datasets is a challenge because likelihood computations, in general, scale with the number of edges in the graph, which is $O(n^2)$ in a network with $n$ nodes.  
To help overcome these difficulties, we follow \citet{hoffman2013stochastic} 
and develop a \emph{stochastic variational inference} algorithm in which we approximate many likelihood computations on only subsets of the data, called \emph{minibatches}.
In the case of a network dataset, the minibatches are comprised of subsets of edges in the graph. 
%
%

We apply this inference procedure to several network datasets that are commonly observed to possess power law structure. Our experiments show that accurately capturing this power law structure improves performance on tasks predicting missing edges in the networks. 

%
\enlargethispage{\baselineskip}

\section{Bayesian models for simple graphs}
\label{sec:simplegraphs}

We represent a simple graph with $n$ nodes by an adjacency matrix $X \defas (X_{i,j})_{i,j\le n}$, where $X_{i,j} = 1$ if there is a link between nodes $i$ and $j$ and $X_{i,j}=0$ otherwise.
Here we will only consider undirected graphs, in which case $X$ represents a symmetric matrix.
 Furthermore, we do not allow self links, so the diagonal entries in $X$ are meaningless.
Most probabilistic models for simple graphs take the entries in $X$ to be conditionally independent Bernoulli random variables; in particular, for every $i, j \le n$, let $p_{i,j}$ be the (random) probability of a link between nodes $i$ and $j$, and let $X_{i,j} \given p_{i,j} \dist \Bernoulli( p_{i,j} )$.
For every simple graph $x \defas (x_{i,j})_{i,j\le n}$, we may then write the likelihood for the parameters $p \defas (p_{i,j})_{i,j\ge 1}$ given $X$ as
\[
\label{eq:likel1}
P( X = x \given p ) 
	= \prod_{i < j \le n} p_{i,j}^{x_{i,j}} 
		(1-p_{i,j})^{1-x_{i,j}}
		,
\]
where in our case it should be clear that the product is only over $i,j \le n$ such that $i<j$ and $i\ne j$.
Random simple graphs date back to the Erd{\"o}s--R{\'e}nyi model, which may be reviewed, along with the more general theory of random graphs, in the text by \citet{bollobas1998random}.
A random graph is called scale-free when the fraction of nodes in the network having $k$ connections to other nodes behaves like $k^{-\tau}$ for large values of $k$ and some exponent $\tau>1$.
More precisely, let $\degree{i} \defas \sum_{j \ne i} X_{i,j}$ denote the (random) degree of node $i$, for every $i\le n$.
Then $X$ is (asymptotically) scale-free when, for every node $i\le n$,
\[
\label{eq:powerlawdef}
\Pr\{ \degree{i} = k \} \sim c k^{-\tau}
	,
	\quad
	\text{ as } n\rightarrow\infty
	,
\]
for some constant $c>0$, a power law exponent $\tau>1$, and $k$ sufficiently large. 
Here the notation $A \sim B$ denotes that the ratio $A/B \rightarrow 1$ in the specified limit.

In order to model scale-free random graphs, \citet{britton2006generating} suggested reparameterizing the model in \cref{eq:likel1} by a sequence of odds ratios $r_{i,j} \defas p_{i,j} / (1-p_{i,j})$, for every $i < j\le n$, which factorize as $r_{i,j} = U_i U_j$, for some $U \defas (U_1, \dotsc, U_n)$.
The node-specific factors $U_i$ are then modeled as $U_i \defas W_i / \sqrt{L}$ for some sequence of nonnegative random variables $W \defas (W_1, \dotsc, W_n)$ and where $L\defas \sum_{i=1}^n W_i$. 
In a series of results, \citep[Thms.~3.1 \& 3.2]{britton2006generating} and \citep[Cor.~6.11 \& Thm.~6.13]{van2016random} assert conditions on the random variables $W$ so that the limiting distribution of the degrees $\degree{i}$ is a mixed Poisson distribution.
We will further detail these previous results in \cref{sec:relatedwork}. \vfill\null
The distribution of $W_i$ is interpreted here as a prior distribution for the degree $\degree{i}$ of node $i$, and if its distribution has heavy tails, then so will the distribution of $\degree{i}$. 
Conversely, if the distribution of $W_i$ does not have heavy tails, then neither will the distribution of the degrees $\degree{i}$.  We explore this alternative in \cref{sec:experiments}.

Previous authors did not suggest any particular choices for the distribution of $W_i$, and so we elect to model them with BFRY random variables \citep{bertoin2006particular,devroye2014simulation}, which have a heavy-tailed distribution and have recently played a role in the construction of several power law models in Bayesian statistics.
Other heavy tailed distributions, such as those exhibited by log normal random variables, may also be used to model the $W_i$, and these options may be explored. 
One benefit of the BFRY distribution is that the thickness of its tails, and thus the power law behavior of the resulting graph, may be straightforwardly controlled by the discount parameter $\discount$.


\section{A generalized random graph}
\label{sec:model}

Consider the model from the previous section, parameterized by the odds ratios $r\defas (r_{i,j} \colon i<j\le n)$.
Define
\[
G(r) \defas \prod_{i<j\le n} (1+r_{i,j})
	= \prod_{i<j\le n} (1+U_i U_j)
	,
	\label{eq:likel_denom}
\]
and note that the conditional likelihood in \cref{eq:likel1} may be rewritten in terms of the degrees $\degree{i}$ as 
\[
P ( X = x \given r )
	&= G(r)^{-1} \prod_{i<j \le n} (U_i U_j)^{x_{i,j}}
	\\
	&= G(r)^{-1} \prod_{i\le n} U_i^{\degree{i}}
	.
	\label{eq:likel}
\]
%
The random simple graph $X$ is called a \emph{generalized random graph}, and we will henceforth write $X \given r \dist \GRGLAW( n, r )$.
Let $\discount \in (0,1)$, which we call the \emph{discount parameter}, and let $\ubound{1}, \ubound{2}, \dotsc$ be a sequence of positive values satisfying
\[
\lim_{n\rightarrow \infty} 
	\ubound{n} = \infty
	\quad
	\text{and}
	\quad
	\lim_{n\rightarrow \infty} 
	\ubound{n}^\discount / n = 0
	.
	\label{eq:creq}
\]
Let the weights $W_1, \dotsc, W_n$ be \iid\ with density
\[
f_n(w) \propto w^{-\discount-1}
	(1-e^{-w}) \Indicator{ 0 \le w \le \ubound{n} }
	.
	\label{eq:densityw}
\] 
(These are truncated BFRY random variables and will be discussed, along with a method for simulation, in \cref{sec:wrvs}.) 
Then the corresponding generalized random graph has an (asymptotic) power law degree distribution with power law exponent $\tau = 1+\discount$. 
We summarize this construction in the following theorem:
%
%
\begin{thm}
\label{result:main}
For every $n$, let $W_1, \dotsc, W_n$ be \iid\ with density $f_n$ and let $(\degree{i})_{i\le n}$ be the degrees of the generalized random graph $X \given r \dist \GRGLAW(n, r)$, where $r \defas (r_{i,j})_{i<j\le n}$ is the sequence of odds ratios
\[
r_{i,j} = W_i W_j / L 
	,
	\qquad i<j \le n
	,
\]
and $L\defas \sum_i W_i$.
Then the following hold:
\begin{enumerate}[nolistsep]
\item For $y\gg 1$, $\Pr\{ \degree{i}=y\} \sim c y^{-1-\alpha}$, for every node $i$ and for some constant $c$, as $n\to \infty$.


\item For any $m$, the collection $\degree{1}, \dotsc, \degree{m}$ are asymptotically independent, as $n\to \infty$.
\end{enumerate}
\end{thm}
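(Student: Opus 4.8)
The plan is to prove both statements at once by identifying the joint limiting law of $(\degree{1},\dots,\degree{m})$ as that of $m$ independent \emph{mixed Poisson} variables, each with mixing density equal to the untruncated limit $f_\infty(w)\propto w^{-\discount-1}(1-e^{-w})$ of the densities $f_n$ in \cref{eq:densityw}. A single computation with the joint probability generating function (pgf) delivers everything: its factorization across nodes is precisely statement~2 (asymptotic independence), while each per-node factor identifies the marginal limit whose tail gives the power law of statement~1. Throughout I would condition on the weights $W\defas(W_1,\dots,W_n)$, under which the edges $X_{k,\ell}$ are independent $\Bernoulli(p_{k,\ell})$ with $p_{k,\ell}=r_{k,\ell}/(1+r_{k,\ell})$ and $r_{k,\ell}=W_kW_\ell/L$.

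Writing $t_i\defas s_i$ for $i\le m$ and $t_i\defas 1$ otherwise, one has the identity $\prod_{i\le m}s_i^{\degree{i}}=\prod_{k<\ell}(t_kt_\ell)^{X_{k,\ell}}$, so by conditional independence
\[
\EE\Bigl[\prod_{i\le m}s_i^{\degree{i}}\,\Big|\,W\Bigr]
=\prod_{k<\ell}\bigl(1-p_{k,\ell}(1-t_kt_\ell)\bigr).
\]
Only terms with $\min(k,\ell)\le m$ survive. The finitely many terms with both indices $\le m$ each tend to $1$ because $p_{k,\ell}\to0$; for the terms with $k\le m<\ell$ the expansion $\log(1-p_{k,\ell}(1-s_k))=-p_{k,\ell}(1-s_k)+O(p_{k,\ell}^2)$ reduces the conditional log-pgf to $-\sum_{k\le m}(1-s_k)\lambda_k+o(1)$, where $\lambda_k\defas\sum_{\ell>m}p_{k,\ell}$ and the remainder is controlled by $\max_\ell p_{k,\ell}\cdot\sum_\ell p_{k,\ell}\to0$. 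Hence the conditional pgf converges in probability to $\prod_{k\le m}e^{-W_k(1-s_k)}$ as soon as one establishes $\lambda_k\pto W_k$.

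The crux is therefore the claim $\lambda_k\pto W_k$, and this is exactly where the hypotheses \cref{eq:creq} enter; I expect it to be the main obstacle, since the heavy tail $(\discount<1)$ rules out a naive law of large numbers for $L$ and must be balanced against the truncation level $\ubound{n}$. Since $f_n$ has tail exponent $\discount$, I would compute $\EE L\sim c\,n\,\ubound{n}^{1-\discount}$ and $\operatorname{Var} L\sim c'\,n\,\ubound{n}^{2-\discount}$, whence $\operatorname{Var}L/(\EE L)^2\sim \ubound{n}^{\discount}/n\to0$ and $L/\EE L\pto1$ by Chebyshev. Because $\ubound{n}\to\infty$ forces $\EE L\to\infty$, every $p_{k,\ell}\le \ubound{n}^2/L\to0$, which both justifies the Poissonization above and legitimizes the linearization $p_{k,\ell}=W_kW_\ell/L+O((W_kW_\ell/L)^2)$. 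Summing the linear part gives $\sum_{\ell>m}W_kW_\ell/L=W_k\bigl(L-\sum_{i\le m}W_i\bigr)/L\to W_k$, while the quadratic remainder is of order $W_k^2\sum_\ell W_\ell^2/L^2\sim W_k^2\,\ubound{n}^{\discount}/n\to0$; the single condition $\ubound{n}^{\discount}/n\to0$ thus controls every error term, and a union bound handles all $k\le m$ simultaneously.

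Given $\lambda_k\pto W_k$, bounded convergence (the pgf is $\le1$ for $s_i\in[0,1]$) together with the independence of $W_1,\dots,W_m$ and the convergence $f_n\to f_\infty$ yields $\EE\bigl[\prod_{i\le m}s_i^{\degree{i}}\bigr]\to\prod_{k\le m}\int_0^\infty e^{-w(1-s_k)}f_\infty(w)\,\dee w$, which is the pgf of a product of i.i.d.\ mixed Poisson laws; this proves statement~2 and identifies each marginal limit $N$ through $\Pr\{N=y\}=\int_0^\infty e^{-w}w^y/y!\,f_\infty(w)\,\dee w$. Finally, this integral evaluates in closed form to $Z_\infty^{-1}\,\Gamma(y-\discount)/\Gamma(y+1)\,\bigl(1-2^{-(y-\discount)}\bigr)$ with $Z_\infty=\int_0^\infty w^{-\discount-1}(1-e^{-w})\,\dee w$, so Stirling's ratio $\Gamma(y-\discount)/\Gamma(y+1)\sim y^{-1-\discount}$ and $1-2^{-(y-\discount)}\to1$ give $\Pr\{N=y\}\sim Z_\infty^{-1}\,y^{-1-\discount}$ as $y\to\infty$, which is statement~1 with $c=Z_\infty^{-1}$ and power law exponent $\tau=1+\discount$.
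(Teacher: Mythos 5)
Your proposal is correct and follows essentially the same route as the paper's proof: condition on the weights, write the conditional joint PGF as the product $\prod_{k<\ell}\frac{L+t_kt_\ell W_kW_\ell}{L+W_kW_\ell}$, log-expand it, control the linear term by Chebyshev concentration of $L$ (the paper's Lemma~1.1) and the quadratic remainder by $\sum_i W_i^2/L^2\pto 0$ (the paper's Lemma~1.2), obtain the mixed-Poisson limit with BFRY mixing density, and extract the tail from $\frac{\alpha\Gamma(y-\alpha)}{y!\,\Gamma(1-\alpha)}(1-2^{\alpha-y})\sim c\,y^{-1-\alpha}$. The only difference is cosmetic: you pass to point probabilities via convergence of the PGF on $[0,1]$ and the continuity theorem for integer-valued variables, whereas the paper establishes uniform boundedness and convergence in probability of all derivatives $\mpgf{n,k}{s}(t;w_k)$ and reads off $\Pr\{\degree{k}=s\}$ at $t=0$ via uniform integrability — a more laborious but equivalent step.
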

%
This construction is closely related to the model described by \citet[Thm.~6.13]{van2016random}, and the proof of \cref{result:main}, which is provided in the supplementary material, follows analogously to the results by \citet[{Thms.~3.1 \& 3.2}]{britton2006generating}.
Note that the power law exponent $\tau = 1+\discount$ of the graph (as described by \cref{eq:powerlawdef}) is determined by the parameter $\discount \in (0,1)$, and takes values in $(1,2)$. While power law exponents in $(2,3)$ has often been suggested in the past, it has more recently been shown that exponents within the $(1,2)$ range of our model is more appropriate in many domains \citep[Ch.~1]{van2016random}; \citep{crane2015atypical}.

\begin{figure}[t]
\centering
	\includegraphics[scale=0.38]{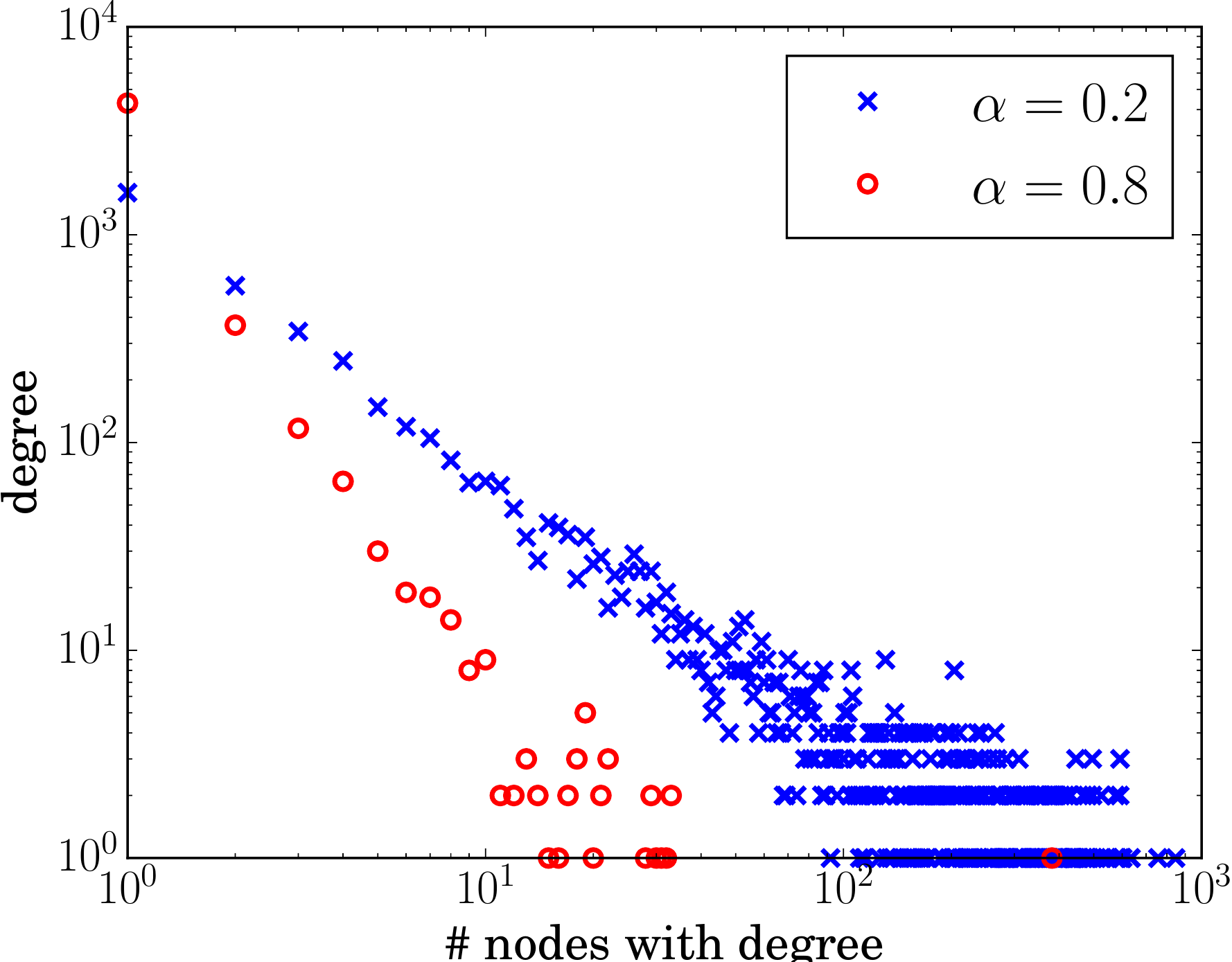}
	\label{fig:powerlaw}
	\caption{The number of nodes with various degrees for two simulated graphs with $n=3000$ nodes and differing values for $\discount$.}
\end{figure}

\subsection{Truncated BFRY random variables}
\label{sec:wrvs}

A random variable $W$ with density function $f_n$ given by \cref{eq:densityw} is a ratio of gamma and beta random variables, upper truncated at $\ubound{n}$.
In particular let
\[
g \dist \gammadist(1-\discount, 1)
	\quad
	\text{and}
	\quad
	b \dist \betadist(\discount, 1)
	,
	\label{eq:gammabeta}
\]
be independent, 
then the ratio $Z \defas g/b$ has density $p(z) \propto z^{-\discount-1} (1-e^{-z})$ on $(0,\infty)$ (by construction), which is known as the Bertoin-Fujita-Roynette-Yor (BFRY) distribution \citep{bertoin2006particular,devroye2014simulation} and has been used in the construction of power law models in some recent applications in machine learning \citep{james2015scaled,lee2016finite}.
The random variable $W$ is then obtained by upper truncating the random variable $Z$ at $\ubound{n}$.
By our requirements on the sequence $\ubound{n}$ (c.f.~\cref{eq:creq}), the density function $f_n$ of $W$ approaches the density function of the BFRY random variable $Z$ as $n\rightarrow \infty$, that is,
\[
\lim_{n\rightarrow \infty} f_n(w) 
	= \frac{\discount}{\Gamma(1-\discount)} w^{-\discount-1}
		(1-e^{-w})
	,
\]
which is heavy-tailed with infinite moments.
%
It is straightforward to simulate these truncated BFRY random variables by repeatedly simulating $g$ and $b$ as in \cref{eq:gammabeta}, accepting $W \defas g/b$ as a sample when $W < \ubound{n}$.

The truncation of $W$ at $\ubound{n}$ produces a random variable with finite mean (for $n<\infty$), which is essential when constructing the generalized random graph and motivates the construction by \citet[Thm.~6.13]{van2016random} alluded to earlier; see \cref{sec:relatedwork}.
For simplicity, one could take $\ubound{n} = n$, but the flexibility to set this parameter allows us to control other properties of the model.  For example, in the next section we show how to vary this truncation level to control the sparsity of the graph.  

\subsection{Controlling power law and sparsity in the graph}
\label{sec:sparsity}


\begin{figure}
	\centering
	\includegraphics[scale=0.38]{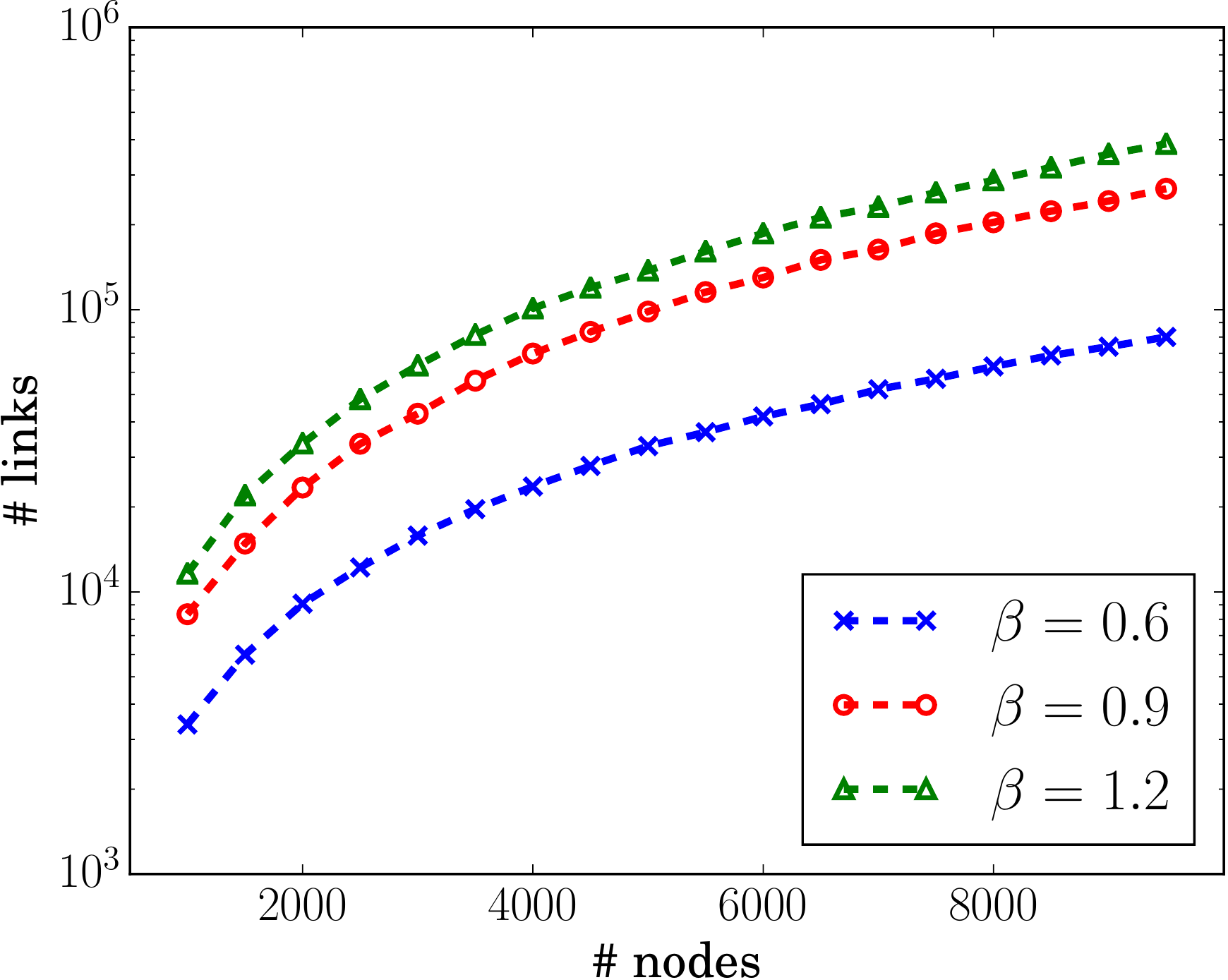}
	\label{fig:sparsity}
	\caption{The average number of links in simulated graphs with varying sparsity parameter $\beta$.}
\end{figure}


The discount parameter $\discount \in (0,1)$ controls the power law behavior of the graph, where decreasing $\discount$ results in heavier tails in the degree distribution of the nodes in the graph.
We can visualize this behavior by simulating graphs at different values of $\discount$.
In \cref{fig:powerlaw}, we set $\ubound{n}=n$ and show the number of nodes of varying degrees in two simulated graphs, one with $\discount = 0.2$ and one with $\discount=0.8$.

The degree distribution of the nodes in a graph of course affects the sparsity of the graph; to characterize this relationship, we can upper bound the expected number of links in the graph as follows:
\begin{thm}
\label{result:sparsity}
Let $E_n$ be the number of positive edges in the graph.  Then
$
\EE [E_n] = O(n \ubound{n}^{1-\discount}).
$
\end{thm}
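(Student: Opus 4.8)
The plan is to reduce the computation of $\EE[E_n]$ to a single one-dimensional moment integral, sidestepping the dependence introduced by the normalizing sum $L$. First I would write $E_n = \sum_{i<j\le n} X_{i,j}$ and apply the tower property together with $X_{i,j}\given p_{i,j} \dist \Bernoulli(p_{i,j})$ to obtain $\EE[E_n] = \sum_{i<j\le n}\EE[p_{i,j}]$, where $p_{i,j} = r_{i,j}/(1+r_{i,j})$ and $r_{i,j} = W_iW_j/L$. Since $x/(1+x)\le x$ for every $x\ge 0$, this immediately gives the pointwise bound $p_{i,j}\le r_{i,j} = W_iW_j/L$.

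The difficulty is that $\EE[W_iW_j/L]$ does not factor, because $L = \sum_k W_k$ couples all the weights together; this is the step I expect to be the main obstacle. I would resolve it by summing before taking expectations. Using the elementary identity $\sum_{i<j\le n} W_iW_j = \tfrac12(L^2 - \sum_{k\le n} W_k^2) \le \tfrac12 L^2$, the offending $L$ in the denominator cancels and yields the clean pointwise bound $\sum_{i<j\le n} W_iW_j/L \le \tfrac12 L$. Taking expectations, and using that the $W_i$ are \iid, this gives $\EE[E_n] \le \tfrac12\EE[L] = \tfrac{n}{2}\EE[W_1]$.

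It then remains to estimate the single moment $\EE[W_1] = \kappa_n^{-1}\int_0^{\ubound{n}} w^{-\discount}(1-e^{-w})\,dw$, where $\kappa_n = \int_0^{\ubound{n}} w^{-\discount-1}(1-e^{-w})\,dw$ is the normalizing constant of $f_n$. As noted after \cref{eq:densityw}, $\kappa_n \to \Gamma(1-\discount)/\discount \in (0,\infty)$ since $\ubound{n}\to\infty$, so $\kappa_n$ is bounded below by a positive constant and contributes only a constant factor to the order estimate. For the numerator I would split the integral at $w = 1$: on $(0,1)$ the factor $1-e^{-w} = O(w)$ makes the integrand behave like $w^{1-\discount}$, which is integrable because $\discount < 1$, contributing only a finite constant; on $(1,\ubound{n})$ the factor $1-e^{-w}$ is bounded above and below by positive constants, so this piece is of the same order as $\int_1^{\ubound{n}} w^{-\discount}\,dw = (\ubound{n}^{1-\discount}-1)/(1-\discount) = O(\ubound{n}^{1-\discount})$, again using $\discount < 1$. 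Hence $\EE[W_1] = O(\ubound{n}^{1-\discount})$, and combining with the previous bound gives $\EE[E_n] \le \tfrac{n}{2}\EE[W_1] = O(n\,\ubound{n}^{1-\discount})$, as claimed.
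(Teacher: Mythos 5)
Your proof is correct and follows essentially the same route as the paper's: both reduce $\EE[E_n]$ to $\sum_{i<j}\EE\bigl[W_iW_j/(L+W_iW_j)\bigr]$ (the paper via differentiating the PGF at $t=1$, you via direct linearity of expectation), bound this by $\tfrac12\EE[L]=\tfrac{n}{2}\EE[W_1]$ using the same $\sum_{i<j}W_iW_j\le \tfrac12 L^2$ cancellation, and then show $\EE[W_1]=O(\ubound{n}^{1-\discount})$ (the paper writes this moment in closed form via the lower incomplete gamma function, you split the integral at $w=1$, which amounts to the same estimate). No gaps.
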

The derivation of this result is provided in the supplementary material.
%
%
While varying $\discount$ can thus control the sparsity of the graph in addition to the power law behavior, we often want to decouple these behaviors, in which case we could parameterize the truncation level as $\ubound{n} = n^{\beta}$, for some \emph{sparsity parameter} $\beta > 0$.  Note the restriction $\alpha<\min\{1,1/\beta\}$ must be enforced in order to ensure that the conditions in \cref{eq:creq} are satisfied.
In this case, the bound in \cref{result:sparsity} becomes $\EE [E_n] = O(n^{1+\beta(1-\discount)})$. 
The interpretation here is that increasing the upper bound $\ubound{n}$ increases the likelihood that any particular node will link to others, but does not affect the (asymptotic) power law characterized by \cref{result:main}.
In \cref{fig:sparsity}, we display the average number of positive edges in graphs that were simulated with fixed $\discount=0.3$ and varying values of the sparsity parameter $\beta$. 
We note that in simulations, we encountered numerical issues in $\beta>1.4$ regimes.

\section{Related work}
\label{sec:relatedwork}

%

Referring to the construction for generalized random graphs in \cref{sec:simplegraphs}, \citet[Thm.~3.1]{britton2006generating} shows that when the weights $W_i$ have finite first and second moments, then the limiting distribution of the degree $\degree{i}$ is a mixed Poisson distribution. 
Most such distributions are light-tailed, however, in which case the degrees will not exhibit power law behavior.
\citet[Thm.~3.2]{britton2006generating} therefore provides an alternative construction in which $W_i$ may have infinite moments (so that it may exhibit a heavy tail), which results in a graph with a power law exponent of $\tau=2$. 
Finally, \citet[Thm.~6.13]{van2016random} suggests yet another construction where the $W_i$ are upper truncated to be of order $o(n)$, where $n$ is the number of nodes in the graph. 
The resulting random variables therefore have finite moments, yet exhibit a heavy tail, and the resulting random graph has a heavy tailed degree distribution with an arbitrary power law exponent.
None of these results suggest a particular choice for the distribution of $W_i$, however, and so we have elected to use BFRY random variables (which are heavy tailed) that are upper truncated (so that they have finite moments).  We note that the requirements on our truncation level (c.f.~\cref{eq:creq}) is less strict than the $o(n)$ criterion of the \citet[Thm.~6.13]{van2016random} construction.

The reader may consult the surveys by \citet{bollobas2003mathematical,albert2002statistical,dorogovtsev2002evolution} for a background on scale-free random graphs, which is too large to review here.
%
%
While these models are numerous, the following recent pieces of work in the Bayesian statistics and machine learning communities may be of interest to the reader: \citet{caron2014sparse,veitch2015class,crane2016edge,cai2015completely}. 
This collection of work discusses power law degree distributions, albeit in some cases in multi-graphs (i.e., graphs with nonnegative integer-valued edges) and in some cases the power law behavior is not characterized, only numerically observed in simulations. 
Many of these models can be seen to invoke their power law properties from the \emph{Pitman--Yor process} \citep{pitman1997two} (or related stochastic processes), where the extent of this behavior is controlled by the discount parameter $\discount \in (0,1)$ of the Pitman--Yor model, which, like the BFRY distribution, is related to a stable subordinator of index $\discount$.

\section{Incorporating latent factors}
\label{sec:blockmodels}

Latent factor models for relational data assume that a set of latent clusters underlie the network.
For example, in a social network, the latent factors could be the unobserved hobbies or interests of individuals, which determine the observed friendships in the network.
Bayesian models for latent factors in relational data are widespread, with some of the most popular based on \emph{stochastic blockmodels}, where models for unsupervised learning, or clustering, are used to infer the latent factors \citep{nowicki2001estimation,kemp2006learning,airoldi2009mixed,miller2009nonparametric}.
%
%
In this section, we present extensions of the generalized random graph that incorporate latent factors by scaling the odds ratios, while maintaining their power law degree distribution.

We will first provide a general result showing how to incorporate random scaling variables into the model, followed by specific examples that model these scaling variables with latent clusters.
Let the odds ratios in the generalized random graph be given by $r_{i,j} = A_{i,j} U_i U_j$ for some $A_{i,j} \geq 0$.
Note that $p_{i,j} \to 1$ as $A_{i,j} \to \infty$ and $p_{i,j} \to 0$ as $A_{i,j} \to 0$, and so the edge-specific weight $A_{i,j}$ simply scales the link probability.
The random graph $X \given r \dist \GRGLAW(n, r)$ then has the likelihood
\[
P ( X = x \given r ) 
	= 
	G(r)^{-1}
	\prod_{i<j\le n} A_{i,j}^{x_{i,j}}
	\prod_{i\le n} U_i ^{\degree{i}}
	,
\]
where the normalization term $G(r)$ in \cref{eq:likel_denom} is now
\[
G(r) &\defas \prod_{i < j \le n} (1 + A_{i,j} U_i U_j)
	\\
	&\: = \sum_x \prod_{i < j \le n} A_{i,j}^{x_{i,j}} \prod_{i\le n} U_i^{\degree{i}}
	,
\]
where the final equality follows simply because $\sum_x P(X=x \given r) = 1$.
%
%
So constructed, the odds ratios $r$ will influence the link probabilities in the generalized random graph, but will not affect the power law behavior of the degree distributions (under some assumptions on the random variables $A_{i,j}$).
We summarize this construction in the following theorem, the proof for which is provided in the supplementary material:
\vfill\null

\begin{thm}
\label{result:grg_scaled_power_law}
Let $(W_i)_{i\le n}$ be \iid\ random variables with density function $f_n(w)$ (in \cref{eq:densityw}).
Let $(A_{i,j})_{i < j \leq n}$ be a collection of uniformly bounded random variables, 
where, for every $i\le n$, the collection $(A_{i,j})_{j>i}$ is exchangeable.
Let $(\degree{i})_{i\le n}$ be the degrees of the random graph $X \given r \sim \GRGLAW(n, r)$, where $r \defas (r_{i,j})_{i<j\le n}$ is the sequence of odds ratios
\[
r_{i,j} = A_{i,j} W_i W_j / L
	,
	\qquad
	i<j\le n
	,
\]
and where $L \defas \sum_i W_i$.
Then the degrees $(\degree{i})_{i\le n}$ satisfy statements (1) and (2) in \cref{result:main}.
\end{thm}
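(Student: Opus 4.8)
The plan is to reduce \cref{result:grg_scaled_power_law} to \cref{result:main} by showing that, even after scaling by the $A_{i,j}$, the conditional intensity governing each degree still concentrates on a constant multiple of the node weight $W_i$, so that the limiting degree law is again a mixed Poisson whose mixing variable has the same regularly varying tail. First I would fix a node $i$ and condition on the array $(A_{i,j})_{j}$ and the weights $W$, which I take to be independent (as in the latent-factor examples that follow). Conditionally, $\degree{i} = \sum_{j\ne i} X_{i,j}$ is a sum of independent Bernoulli variables with parameters $p_{i,j} = r_{i,j}/(1+r_{i,j})$, where $r_{i,j} = A_{i,j}W_iW_j/L$. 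Since the $A_{i,j}$ are uniformly bounded, say by $C$, we have $p_{i,j} \le C W_iW_j/L$, so the Stein--Chen/Le Cam bound on the total-variation distance between $\degree{i}$ and $\mathrm{Poisson}(\lambda_i)$, with $\lambda_i \defas \sum_{j\ne i} p_{i,j}$, is controlled by $\sum_{j\ne i} p_{i,j}^2 \le C^2 W_i^2 \sum_{j} W_j^2 / L^2$. The same estimate shows $\lambda_i$ differs from $W_i \sum_{j\ne i} A_{i,j} W_j / L$ by a vanishing amount.

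The crux is then to show that the weighted average
\[
R_i \defas \frac{\sum_{j\ne i} A_{i,j} W_j}{\sum_{j} W_j}
\]
converges in probability to a bounded limit $\bar A_i$, so that $\lambda_i \pto \bar A_i W_i$. Invoking de Finetti on the exchangeable row $(A_{i,j})_{j>i}$, the $A_{i,j}$ are conditionally \iid\ given a directing measure with conditional mean $\bar A_i$; hence $\EE[R_i \mid W] = \bar A_i$ and $\mathrm{Var}(R_i\mid W) \le C^2 \sum_{j} W_j^2/(\sum_{j} W_j)^2$. The decisive quantity is therefore $\sum_{j\le n}W_j^2/(\sum_{j\le n}W_j)^2$, which for the truncated BFRY weights is of order $\ubound{n}^{\discount}/n$ and hence vanishes \emph{precisely} by the requirement \cref{eq:creq}. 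This is the step I expect to be the main obstacle: the weights have asymptotically infinite mean and variance, so the law of large numbers for the self-normalized, heavy-tailed sum $R_i$ must be driven by \cref{eq:creq} rather than by finiteness of moments, and the same ratio simultaneously kills the Poisson-approximation error and the $r_{i,j}^2$ correction from the map $r\mapsto r/(1+r)$.

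Finally, combining these steps gives $\degree{i}\dto \mathrm{Poisson}(\bar A_i W_i)$, a mixed Poisson with mixing variable $\bar A_i W_i$. Since $\bar A_i$ is bounded and independent of $W_i$, Breiman's lemma shows $\bar A_i W_i$ inherits the regularly varying tail of $W_i$ with the \emph{same} index $\discount$; the transfer of regular variation from the mixing variable to the mixed-Poisson count (already established for \cref{result:main}) then yields statement (1), $\Pr\{\degree{i}=y\}\sim c\,y^{-1-\discount}$. For statement (2), I would work with the joint probability generating function of $\degree{1},\dots,\degree{m}$: conditionally on $(A,W)$ the only edges shared between distinct $\degree{i}$ and $\degree{k}$ are the $\binom{m}{2}$ internal edges, each contributing $p_{i,k}\to 0$, so the conditional joint generating function factorizes in the limit into a product of single-node Poisson generating functions with intensities $\bar A_i W_i$. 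Because the node weights $W_1,\dots,W_m$ are \iid\ and each limiting intensity localizes to its own node weight, the limiting mixing variables are independent and the joint generating function factorizes, giving asymptotic independence exactly as in \cref{result:main}.
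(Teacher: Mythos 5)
Your proposal is sound in its main thrust and in fact supplies more of the argument than the paper does: the paper's proof of this theorem consists only of computing the conditional joint PGF $\prod_{i<j\le n}\frac{1+A_{i,j}t_it_jU_iU_j}{1+A_{i,j}U_iU_j}$ and declaring that the remainder "follows analogously" to \cref{result:main}. Your route is genuinely different in its tools. Where the paper (for \cref{result:main}) controls all derivatives $\mpgf{n,k}{s}$ of the conditional PGF via an inductive recursion (\cref{lem:cond_converge}), passes to the limit by uniform integrability, and then computes $\lim_n\Pr\{\degree{k}=y\}$ \emph{exactly} as an integral against the BFRY density before applying Stirling, you use a Le Cam/Stein--Chen Poisson approximation for the conditional degree and Breiman's lemma for the tail transfer. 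Both routes hinge on the same decisive estimate: your $\sum_{j}W_j^2/(\sum_jW_j)^2=O(\ubound{n}^{\discount}/n)\to 0$ is exactly the paper's \cref{lem:R} with $s=2$, and you correctly identify that it is \cref{eq:creq}, not moment finiteness, that drives it. Your appeal to de Finetti for the exchangeable rows, giving $R_i\pto\bar A_i$ and hence a mixed Poisson with mixing variable $\bar A_iW_i$, makes explicit a step the paper leaves entirely implicit; and the Breiman/regular-variation step is arguably \emph{necessary} here, since the closed-form integral used for \cref{result:main} is unavailable once the mixing variable is $\bar A_iW_i$ rather than $W_i$.

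Two caveats. First, you implicitly assume $A$ and $W$ are independent; the theorem does not state this, though every example satisfies it, and you should make the assumption explicit. Second, and more substantively, for statement (2) you assert that the limiting mixing variables $\bar A_1W_1,\dotsc,\bar A_mW_m$ are independent because the $W_i$ are i.i.d. That does not follow: $\bar A_i$ is the directing mean of row $i$, and the hypotheses impose only within-row exchangeability, not cross-row independence, so the $\bar A_i$ may be dependent (in the blockmodel example they all share $\theta$ and $\pi$, so the limiting joint PGF $\EE\bigl[\prod_{i\le m}\exp\{(t_i-1)\bar A_iW_i\}\bigr]$ need not factorize). Either an extra hypothesis (independent rows, or degenerate directing measures) is required, or the independence claim must be read conditionally on $A$. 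To be fair, the paper's one-line "follows analogously" elides exactly the same point, so this is as much a gap in the theorem's stated hypotheses as in your argument --- but you should flag it rather than assert the factorization.
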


For example, we may construct \emph{stochastic blockmodels}, such as those introduced by \citet{nowicki2001estimation}, as follows:
For every $i\le n$, let $Z_i$ be a random variable taking values in $\{1,\dotsc,K\}$, indicating which one (and only one) of $K$ different factors to associate with node $i$.
We want the latent cluster assignments for two nodes $i$ and $j$ to influence their link probability, which we could capture with a set of parameters $\theta_{k,\ell}$, for $k, \ell = 1,\dotsc, K$.
Then the parameter $\theta_{Z_i, Z_j}$ could represent, or influence, the probability of a link between nodes $i$ and $j$.
Taking a Bayesian approach, the indicator variables $Z_i$ may be modeled with a Dirichlet-categorical conjugate distribution and their values may be inferred via probabilistic inference. 
An example of such a model could be summarized as follows: Let
\begin{alignat}{2}
Z_i &\dist \categorical(\pi) ,
	&&\qquad i\le n
	,
	\\
	\pi &\dist \Dirichlet(\conc/K)
	,
	&&\qquad \text{where } \conc>0
	,
	\\
	\theta_{\ell,k} &\dist \gammadist ( a_\theta, b_\theta )
		,
		&&\qquad \ell, k \le K
		,
	\\
	A_{i,j} &\,\, = \,\, \theta_{Z_i, Z_j}
	,
	&&\qquad i < j \le n
	,
\end{alignat}
and construct the random graph $X$ as in \cref{result:grg_scaled_power_law}.
\citet{kemp2006learning} developed a nonparametric extension of a similar model that in a sense takes the limit $K\rightarrow \infty$, allowing an appropriate number of clusters to be automatically inferred from the data.  In this case, the marginal law of the indicator variables $Z_1, \dotsc, Z_n$ is given by a Chinese restaurant process (with concentration parameter $\conc$).

Several generalizations of the stochastic blockmodel allow the clusters underlying the network to overlap, leading to \emph{mixed membership stochastic blockmodels} \citep{airoldi2009mixed} or the related \emph{latent feature relational models} \citep{miller2009nonparametric}.
To capture this structure, we may generalize the indicators $Z_i$ to now represent a binary $K$-vector with entry $Z_{i,k}=1$ indicating node $i$ is associated with cluster $k$, now called a \emph{feature}, and $Z_{i,k}=0$ otherwise.  One example of such a model could be summarized as follows:
\begin{alignat}{2}
Z_{i,k} &\dist \Bernoulli ( p_k )
	,
	&&\quad i\le n , k\le K,
	\\
	p_k &\dist \betadist( \conc, \conc \totalmass/K )
	,
	&&\quad k\le K , \text{and } \conc, \totalmass>0
	,
	\\
	\theta_{\ell,k} &\dist \gammadist ( a_\theta, b_\theta )
		,
		&&\quad \ell, k = 1, 2, \dotsc
		,
	\\
	A_{i,j} &\,\, = \,\, \sum_{k,\ell} \theta_{k,\ell} Z_{i,k} Z_{j,\ell}
	,
	&&\quad i < j \le n
	,
\end{alignat}
and construct the random graph $X$ as in \cref{result:grg_scaled_power_law}.
\citet{miller2009nonparametric} derived a nonparametric extension of this model that in a sense takes the limit $K\rightarrow \infty$, in which case the marginal law of the vectors $Z_1, \dotsc, Z_n$ is that of an \emph{Indian buffet process} (with mass parameter $\totalmass$ and concentration parameter $\conc$) \citep{GGS2007}.

\section{Variational inference}
\label{sec:inference}

We derive a variational Bayesian inference algorithm \citep{jordan1999introduction} that approximates the (optimal state of the) posterior distribution of the model components, given a network dataset.
We approximate the required gradients in this procedure with stochastic gradient ascent \citep{bottou2010large,hoffman2013stochastic}, computed on minibatches (i.e., subsets) of edges in the graph.

\subsection{The variational lower bound}

In variational inference, we approximate the posterior distribution on the latent variables $W \defas (W_1, \dotsc, W_n)$ with a variational distribution $q(W; \theta)$, the parameters $\theta$ of which are fit to maximize the following lower bound on the marginal likelihood
\[
\log p(X) 
	&\ge \EE_{q(W; \theta)} \Bigl [
	\log \frac{p(X \given W ; \discount) p(W; \discount)}{q(W; \theta)} 
	\Bigr ]
	,
	\label{eq:elbo1}
\]
where 
$p(X \given W)$ is the likelihood function computed as in \cref{eq:likel}, and $p(W; \discount)$ is the prior on $W$ represented by the density function in \cref{eq:densityw}.
The (non random) discount parameter $\discount$ is inferred by corresponding gradient ascent updates maximizing the likelihood of the model, which is described in \cref{sec:discount_inference}.

We specify a mean field variational distribution $q(W ; \theta) = \prod_{i=1}^n q(W_i ; \theta_i)$.
We considered several approximations for the marginals $q(W_i ; \theta_i)$ including truncated BFRY and truncated gamma distributions, however, in our experiments we found that the following \emph{rectified gamma distribution} performed well: 
\[
W_i &=_q \min\{ W_i', \ubound{n} \}
	,
	\\
	W_i' &\dist \gammadist( \theta_{i,\shp}, \theta_{i,\rte} )
	,
\]
independently for every $i\le n$, where $\theta_{i,\shp}$ and $\theta_{i,\rte}$ denote the shape and rate parameters of the gamma distribution, respectively, and the notation $=_q$ emphasizes that this formula holds under the variational distribution $q$.

\subsection{Stochastic gradient ascent}

We maximize the lower bound on the right hand side of \cref{eq:elbo1} by stochastic gradient ascent, where on the $t$-th step of the algorithm,
we make the following updates to  the parameters in parallel
\[
\theta_i^{(t+1)} \leftarrow \theta_i^{(t)}
	+ \rho_t \grad_{\theta_i} \EE_{q(W; \theta^{(t)})} [ \Lcal ( X, W; \theta^{(t)} ) ]
	,
	\label{eq:svi}
\]
for $i\le n$ and some sequence $(\rho_t)_{t\ge 1}$ of positive numbers satisfying the Robbins--Monro criterion \citep{robbins1951stochastic}
$
\sum\nolimits_t \rho_t = \infty 
$
and
$
\sum\nolimits_t \rho_t^2 < \infty
	,
$
and where
\[
\Lcal( X, W; \theta ) 
	&\defas \log p(X, W; \discount) - \log q(W; \theta)
	\\
	&\: =
	\sum_{(i,j) \in \edges} \log p( X_{i,j} \given W ) 
		 + \sum_{i=1}^n \log p( W_i ; \discount ) 
		 \nonumber
		 \\
		 &\qquad \qquad
		 - \sum_{i=1}^n \log q( W_i ; \theta_i ) 
	,
	\label{eq:logpq}
\]
where $\edges$ denotes the observed edges (both links and non-links) in the dataset.
We cannot evaluate the expectation (with respect to the rectified gamma distributions $q(W ; \theta)$) analytically, 
%
and so we elect to use a particular Monte Carlo approximation of this gradient detailed by \citet{knowles2015stochastic}, which was developed for gamma variational distributions and easily applies to the rectified gamma case.

Briefly, for every $i\le n$, create the collection of $S$ Monte Carlo samples from the variational distribution as follows: Independently for $s\le S$, let ${z_i^{(s)} \dist \uniform(0,1)}$, and set $W_i^{(s)} = \psi( z_i^{(s)}; \theta_i )$, where $\psi( z; \theta ) \defas \min\{ \igammacdf_\theta(z), \ubound{n} \}$ and $\igammacdf_\theta(x)$ is the inverse of the cumulative distribution function for a gamma random variable.  For convenience, we recall that
\[
\gammacdf_{a,b}(x)	
	= \int_0^x \frac{b^a}{\Gamma(a)}
		t^{a-1} e^{-b t} \dee t
		.
\]
For every $k\le n$, the gradient with respect to the parameters $\theta_k$ is then approximated by
\[
\begin{split}
&\grad_{\theta_k} \EE_{q(W; \theta)} [ \Lcal ( X, W; \theta ) ]
	\\
	&\qquad \approx
	\frac 1 {S} \sum_s 
	\grad_{W_k} \Lcal ( X, W^{(s)}; \theta ) \grad_{\theta_k} \psi (z_k^{(s)} ; \theta_k)
	,
	\label{eq:grad_approx1}
\end{split}
\]
where $W^{(s)} \defas (W_1^{(s)}, \dotsc, W_n^{(s)})$.
This estimator is unbiased and has low enough variance that often a single sample suffices for the approximation \citep{salimans2013fixed,kingma2013auto}.
The gradient of $\psi$ is nonzero only when ${\{\igammacdf_{\theta_k}(z_k^{(s)}) < \ubound{n}\}}$, in which case we may immediately obtain the partial derivative with respect to the rate parameter; in particular, we have
\[
\grad_{\theta_{k,\rte}} \psi (z_k^{(s)} ; \theta_k) 
	=
	\begin{cases}
	\frac{z_k^{(s)}}{\theta_{k,\rte} },
		&\text{if } \igammacdf_{\theta_k}(z_k^{(s)}) < \ubound{n}
		,
		\\
		0 , &\text{otherwise}
	.
	\end{cases}
\]
The partial derivative with respect to the shape parameter $\grad_{\theta_{k,\shp}} \psi (z_k^{(s)} ; \theta_k)$ does not have a closed form solution and must be approximated.
Different approximation routines are suggested by \citet{knowles2015stochastic} for different regimes of the shape parameter $\theta_{k,\shp}$, and we found these approximations to be accurate and efficient in our experiments.
%

\subsection{Minibatches of edges in the graph}

Computing the $n$ required gradients in \cref{eq:svi} may be done in parallel, and this computation, whether performed analytically or with automatic differentiation methods, scales with the number of edges in the graph.
This can be prohibitive for many network datasets, and we therefore introduce a further approximation where this gradient is evaluated on subsets (a.k.a. \emph{minibatches}) of the dataset, a technique from stochastic gradient ascent \citep{bottou2010large} adopted in the context of variational Bayesian inference by \citet{hoffman2013stochastic}.
In the case of a network dataset, we may select minibatches that are subsets of the observed edges in the graph.
In particular, write the gradient of \cref{eq:logpq} with respect to the variable $W_k$ (which is required by \cref{eq:grad_approx1}) as
\[
\grad_{W_k} \Lcal ( W^{(s)}; \theta )
	=
	\sum_{(i,j) \in \edges} g_{(i,j)} ( X, W^{(s)} ; k )
	,
\]
where $g_{(i,j)} (X,W ; k) \defas \grad_{W_k} [\log p( X_{i,j} \given W ) + \card{\edges}^{-1} \log p(W; \discount) - \card{\edges}^{-1} \log q(W; \theta)]$ is the gradient that ignores all but one edge of the graph.
%
We may therefore compute the unbiased estimate of this gradient
\[
\grad_{W_k} \Lcal ( W^{(s)}; \theta )
	\approx
	\frac{\vert \edges \vert}{\vert \minibatch \vert}
	\sum_{(i,j) \in \minibatch} g_{(i,j)} ( X, W^{(s)} ; k )
	,
\]
on a minibatch $\minibatch \subseteq \edges$ of the observed edges.
%

\subsection{Inference on the parameters $\discount$ and $\beta$}
\label{sec:discount_inference}

Without good prior knowledge of how to set the discount parameter $\discount$ and the sparsity parameter $\beta$ controlling the power law and sparsity behaviors of the graph, respectively, we infer their values from the data.
First consider the discount parameter, which we infer with gradient ascent. 
After every update to the latent variables $W$, we fix them to their mean under the distribution $q$, i.e., $\hat W \defas (\hat W_1, \dotsc, \hat W_n)$ where $\hat W_i = \EE_{q(W_i; \theta_i)} [ W_i ]$, and take a step in the direction of the gradient
\[
\grad_{\discount} \log p( \hat W ; \discount )
	&=
	\sum_{i=1}^n
	\grad_{\discount} \log p( \hat W_i ; \discount )
	\\
	&=
	\sum_{i=1}^n \Bigl [
		- \frac{\grad_{\discount} Z_{\discount,\beta} }{Z_{\discount,\beta}}
		- \log(\hat W_i)
		\Bigr ]
	,
	\label{eq:discountgrad}
\]
which is straightforward to derive from the density function in \cref{eq:densityw}, and 
where the normalization term
\[
Z_{\discount,\beta}
	\defas \int_0^{\ubound{n}} w^{-\discount-1}
		(1-e^{-w})
		\: \dee w
	\label{eq:Wnormconst}
\]
is a function of $\discount$ and $\beta$, if we let $\ubound{n} = n^\beta$ as suggested in \cref{sec:sparsity}.
We do not have a closed form solution for this term when $\ubound{n} < \infty$, and, unfortunately, inference on model parameters where the likelihood is difficult to evaluate is a challenging problem; for example, see the approaches taken by \citet{murray2006mcmc} on such problems, which those authors call \emph{doubly intractable distributions}.  
Accurate inference for $\discount$ is important in our model, because it controls the power law behavior of the graph.
In our experiments, we approximate the gradient in \cref{eq:discountgrad} for (fixed $\beta$) by approximating $Z_{\discount,\beta}$ (via \cref{eq:Wnormconst}) and
$
\grad_{\discount} Z_{\discount,\beta}
	=
	\int_0^{\ubound{n}}
	- w^{-\discount-1} (1-e^{-w})  \log w\: \dee w
	,	
$
with line integrals.
In the \cref{sec:experiments}, we demonstrate that this approximation works well in various regimes of $\discount$, with slight overestimation for moderate values.

Similar approaches to infer $\beta$ may be derived with finite difference approximations; 
we did not find these approaches successful in our experiments, however, and so we instead select $\beta$ by cross validation.

\section{Experiments}
\label{sec:experiments}

\begin{figure}[t!]
\centering
\includegraphics[scale=0.4]{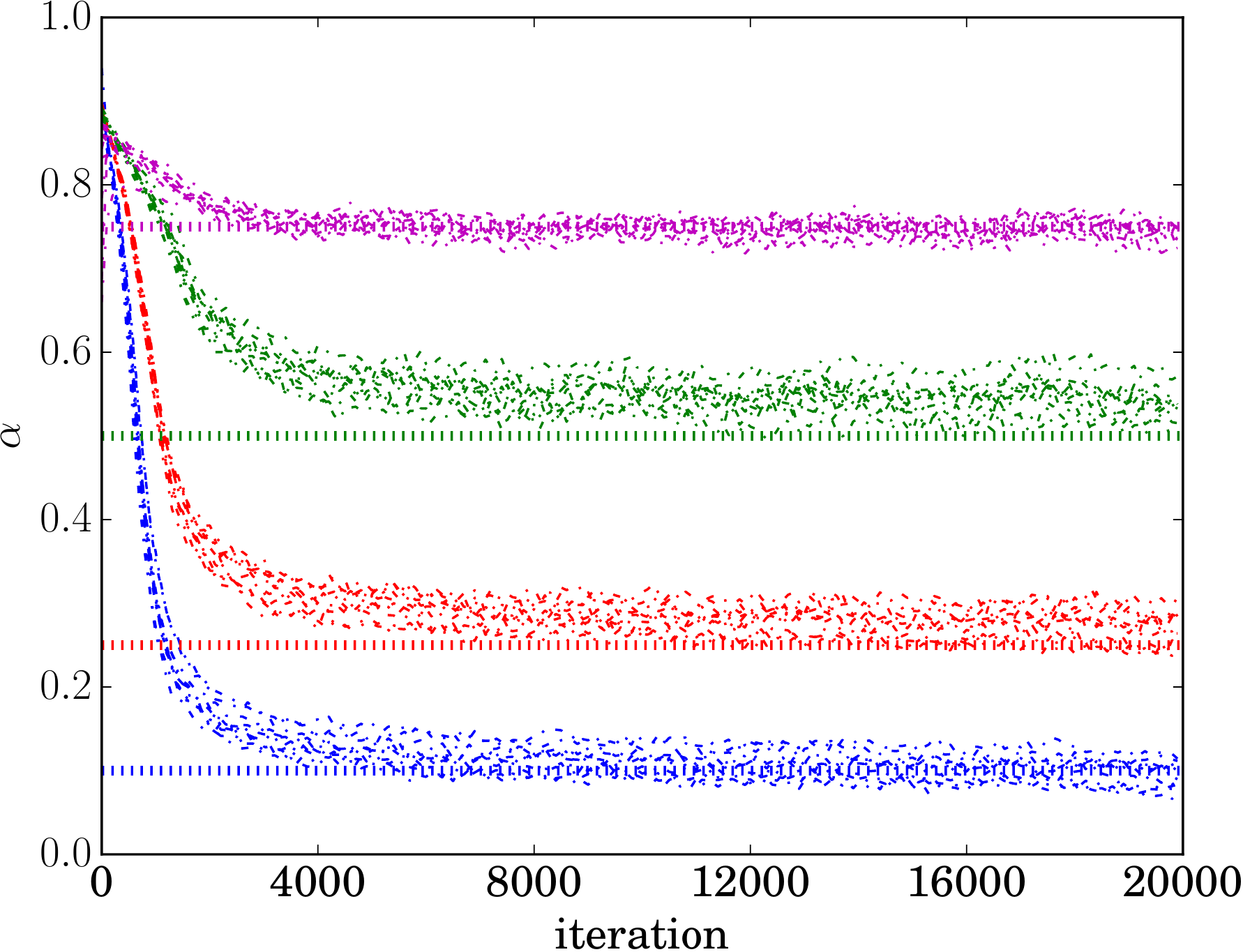}
\caption{Trace plots of the discount parameter $\discount$ during 10 different inference runs, each time simulating a dataset from the model with either $\discount \in \{0.1,0.3,0.5,0.7\}$ and intializing $\discount$ randomly.}
\label{fig:alpha_traces}
\end{figure}

\begin{table}[t]
\small
\caption{Comparision between the BFRY model and the Gamma baseline model when $\discount$ is known.
The test log-likelihoods were averaged over the last 4,000 of 20,000 gradient descent updates.}
\centering
\vskip 1em
\begin{tabular}{@{}M{1cm}M{0.9cm}M{2.575cm}M{2.5cm}@{}}
\toprule 
true $\discount$ & model & max test log-likel & avg test log-likel \\
\midrule
\multirow{2}{*}{$\discount=0.3$} &
BFRY & {\bf -57323.19 $\pm$ 91.62} & {\bf -57675.72 $\pm$ 31.71} \\
& Gamma & -71341.90 $\pm$ 116.82 & -71841.66 $\pm$ 47.38 \\
\addlinespace[0.1cm]
\multirow{2}{*}{$\discount=0.5$} &
BFRY & {\bf -21077.62 $\pm$ 79.64} & {\bf -21289.75 $\pm$ 34.23} \\
& Gamma & -24430.38 $\pm$ 73.06 & -24701.06 $\pm$ 11.31 \\
 \addlinespace[0.1cm]
 \multirow{2}{*}{$\discount=0.7$} &
BFRY & {\bf -7894.67 $\pm$ 41.84} & {\bf -8027.42 $\pm$ 51.08} \\
& Gamma & -8511.48 $\pm$ 22.45 & -8601.50 $\pm$ 15.42 \\
\bottomrule
\end{tabular}
\label{table:synthetic_summary}
\end{table}

\begin{table}[t]
\small
\caption{Comparision between the BFRY model and the Gamma baseline model on the air traffic, blogs, and social network datasets.
The test log-likelihoods were averaged over the last 4,000 of 20,000 gradient descent updates.}
\centering
\vskip 1em
\begin{tabular}{@{}M{0.9cm}M{0.9cm}M{2.575cm}M{2.575cm}@{}}
\toprule 
dataset & model & max test log-likel & avg test log-likel \\
\midrule
\multirow{2}{*}{500Air} &
BFRY & {\bf -1628.51 $\pm$ 10.46} & {\bf -1654.20 $\pm$ 6.79} \\
& Gamma & -1842.10 $\pm$ 3.97 & -1870.35 $\pm$ 0.28 \\
\addlinespace[0.1cm]
\multirow{2}{*}{polblogs} &
BFRY & {\bf -474.67 $\pm$ 32.20} & {\bf -503.20 $\pm$ 37.85} \\
& Gamma & -555.24 $\pm$ 18.27 & -596.78 $\pm$ 0.78 \\
 \addlinespace[0.1cm]
 \multirow{2}{*}{Fb107} &
BFRY & {\bf -18098.38 $\pm$ 20.50} & {\bf -18209.94 $\pm$ 12.86} \\
& Gamma & -18403.66 $\pm$ 31.76 & -18568.05 $\pm$ 2.79 \\
\addlinespace[0.1cm]
\multirow{2}{*}{openfl} &
BFRY & {\bf -16561.13 $\pm$ 137.89} & {\bf -16947.70 $\pm$ 177.21} \\
& Gamma & -17475.52 $\pm$ 31.97 & -17746.79 $\pm$ 6.65 \\
\bottomrule
\end{tabular}
\label{table:real_summary}
\end{table}

\newcommand{\colspace}{1.745cm}

\begin{table*}[t]
\small
\caption{Inferred hyperparameters in the experiments.}
\centering
\vskip 1em
\begin{tabular}{@{}p{2cm}M{\colspace}M{\colspace}M{\colspace}M{\colspace}M{\colspace}M{\colspace}M{\colspace}@{}}
\toprule 
& true $\alpha=0.3$ & true $\alpha=0.5$ & true $\alpha=0.7$ & 500Air & polblogs & Fb107 & openfl \\
\midrule
BFRY -- $\discount$ & 0.33 $\pm$ 0.00 & 0.53 $\pm$ 0.00 & 0.68 $\pm$ 0.00 & 0.23 $\pm$ 0.03 & 0.64 $\pm$ 0.06 & 0.00 $\pm$ 0.00 & 0.67 $\pm$ 0.21 \\
Gamma -- $\theta$ & 5.29 $\pm$ 0.01 & 1.42 $\pm$ 0.00 & 0.51 $\pm$ 0.00 & 5.10 $\pm$ 0.01 & 0.66 $\pm$ 0.00 & 33.58 $\pm$ 0.01 & 0.47 $\pm$ 0.00 \\
BFRY -- $\beta$ & -- & -- & -- & 1.08 $\pm$ 0.16 & 1.40 $\pm$ 0.00 & 0.80 $\pm$ 0.0 & 1.28 $\pm$ 0.10 \\
\bottomrule
\end{tabular}
\label{table:hyperparameters}
\end{table*}


We first demonstrate how the inference procedure in \cref{sec:discount_inference} can correctly differentiate between various regimes of $\discount$.  We ran an experiment where for each value $\discount \in \{0.1,0.3,0.5,0.7\}$, we simulated 10 datasets from the model with $n=1,000$ nodes, while fixing $\beta=1.0$.  For each simulated dataset, we ran an instance of the inference routine with $\discount$ randomly initialized.
In \cref{fig:alpha_traces}, we show the trace plots of alpha during each instance of the inference routine.
For comparison, the true values of $\discount$ are also shown as horizontal dashed lines.
We can see that the inference routine can correctly distinguish between these different regimes of $\discount$, with slight overestimation in the moderate $\alpha$ regime.
Interestingly, despite random initializations of $\discount \in (0,1)$, the algorithm always immediately inflates $\discount$ to around 0.9, and then slowly decreases this value during inference, regardless of what value of $\discount$ generated the data.

We next demonstrate that accurately capturing power law structures in datasets will improve predictive performance. 
While fixing $\beta=1.0$, we simulate three network datasets with 5,000 nodes from our model with discount parameters $\alpha = 0.3, 0.5,$ and $0.7$, respectively, which therefore exhibit increasingly lighter-tailed degree distributions. The generated graphs have 117,300, 32,925, and 9,460 links, respectively. 
To establish a baseline model that does not exhibit power law degree distributions but is otherwise comparable to our model, we implement the generalized random graph where the node-specific weights are constructed from the gamma random variables $W_i \dist \gammadist(\theta, 1)$, for some positive parameter $\theta$, i.i.d.\ for every node $i\le n$.
Note that the parameter $\theta$ controls the sparsity of the generated graph; larger values of $\theta$ imply denser graphs.  
It follows analogously to \cref{result:grg_scaled_power_law} that 
\[
\Pr\{\degree{i}=k\} \sim \frac{k^{\theta-1}}{2^{k+\theta}},
\]
for $k \gg 1$, as $n\to\infty$. 
This model therefore does not exhibit power law behavior, as desired.
We refer to this model as ``Gamma'' and the power law graph model as ``BFRY''.

We ran an experiment holding out 20\% of the edges in the simulated graphs as test sets, training the two models on the remaining 80\% of the edges.  We used a mini-batch size of 5,000 edges (note that the training dataset corresponds to almost 10 million observed edges).  We ran each inference procedure for 20,000 steps of stochastic gradient ascent updates, using Adam \citep{kingma2015adam} to adjust the learning rates at each step.  We repeated each experiment 5 times, each time holding out a different test set and using a different random initialization. Again, for this experiment we fixed $\beta=1$.
In \cref{table:synthetic_summary} we report a mean log-likelihood metric for the test datasets, where the metric for each run is obtained by averaging the test log-likelihoods across the states for the last 4,000 steps of the inference procedure; the displayed intervals are at $\pm 1$ standard deviation about the metric, from across the 5 repeats.  We also report a max log-likelihood metric, which simply records the maximum test log-likelihood across the last 4,000 steps of the inference procedure, instead of the average.
The best performing method is highlighted in bold (which in each case was the BFRY model).

In each case, we see that the BFRY model achieves higher test log-likelihood metrics than the Gamma model, as expected, implying that accurately capturing a power law degree distribution improves predictive performance (when power law behavior is truly present in the network).
In \cref{table:hyperparameters}, we report the inferred values for $\discount$, which were reasonably accurate, though we see slight overestimation for some regimes, as seen in the demonstration earlier.
For the baseline Gamma model, we optimized the hyperparameter $\theta$ using gradient ascent maximizing the evidence lower bound of the model (c.f.~\cref{eq:elbo1}), and the inferred values are also reported in \cref{table:hyperparameters}.

Next, we ran similar experiments on the following network datasets, each of which are expected to exhibit power law degree distributions:
\begin{itemize}[nolistsep]
\item `USTop500Airports': 500 nodes, 2,980 links

\item `openflights': 7,976 nodes, 15,243 links

\item `polblogs': 1,490 nodes, 9,517 links

\item `Facebook107': 1,034 nodes, 26,749 links

\end{itemize}
Where appropriate, we saved only the upper triangular parts of the adjacency matrices.
The `USTop500Airports' dataset contains the (undirected, unweighted) flight connections between the 500 busiest US airports. 
The similar, though much larger, `openflights' dataset contains the flight connections between non-US airports.
Scale-free networks have been proposed for such \emph{traffic networks}, detailed for these datasets by \citet{colizza2007reaction}.
The `polblogs' dataset contains the links between political blogs (judged by hyperlinks between the front webpages of the blogs) in the period leading up to the 2004 US presidential election, which is observed to exhibit power law degree distributions by \citet{adamic2005political}.
The `Facebook107' dataset contains ``friendships'' between users of a Facebook app, collected by \citet{leskovec2012learning}; social networks are widely studied for their power law degree distributions.

For both the Gamma and BFRY models, we ran our variational inference procedure for 20,000 steps on each dataset. 
As before, we repeated the experiment 5 times for each network, each time holding out a different 20\% of the edges in the network as a testing set.
We selected the value of $\beta$ from among the grid $\{0.6, 0.9, 1.0, 1.2, 1.4\}$ with 5-fold cross validation on the training set.
We set the minibatch size to be equal to the number of nodes in the graph; for example, we used minibatches of 1,490 edges for the polblog dataset.
The evaluation metrics on the test datasets are summarized in \cref{table:real_summary}, and the inferred hyperparameter values are reported in \cref{table:hyperparameters}.
We see that the BFRY model once again outperforms the Gamma baseline model, according to the test log-likelihood metrics.

Probabilistic inference on $\discount$ by the BFRY model provides some of the most interesting analyses here.
With $\discount \approx 0.00$ (underflowing our machine's precision), the Facebook107 social network has the degree distribution with the heaviest tails, followed by the USTop500Airports traffic network with $\discount \approx 0.23$, the polblog citation network with $\discount \approx 0.64$, 
and the openflights network has the lightest tailed degree distribution with $\discount \approx 0.67$.
%


\section{Future work}


Future work could focus on implementing the latent factor modeling generalizations presented in \cref{sec:blockmodels}, which are natural assumptions in many domains where networks are expected to exhibit power law degree distributions.
Alternative approaches to inference on the sparsity parameter $\beta$ should also be explored, since controlling the sparsity in the graph was important for good predictive performance.

\newpage

\section*{Acknowledgements}

The authors thank Remco van der Hofstad for helpful advice and anonymous reviewers for helpful feedback.
J.\ Lee and S.\ Choi were partly supported by an Institute for Information \& Communications Technology Promotion (IITP) grant, funded by the Korean
government (MSIP) (No.2014-0-00147, Basic Software Research in
Human-level Lifelong Machine Learning (Machine Learning Center)) and Naver, Inc. 
C.\ Heaukulani undertook this work in part while a visiting researcher at the Hong Kong University of Science and Technology, who along with L.\ F.\ James was funded by grant rgc-hkust 601712 of the Hong Kong Special Administrative Region.

\bibliography{random_graph}

\begin{thebibliography}{35}
\providecommand{\natexlab}[1]{#1}
\providecommand{\url}[1]{\texttt{#1}}
\expandafter\ifx\csname urlstyle\endcsname\relax
  \providecommand{\doi}[1]{doi: #1}\else
  \providecommand{\doi}{doi: \begingroup \urlstyle{rm}\Url}\fi

\bibitem[Adamic \& Glance(2005)Adamic and Glance]{adamic2005political}
Adamic, L.~A. and Glance, N.
\newblock The political blogosphere and the 2004 {US} election: divided they
  blog.
\newblock In \emph{Proceedings of the 3rd international workshop on Link
  discovery}, pp.\  36--43, 2005.
\newblock URL
  \url{http://www.cise.ufl.edu/research/sparse/matrices/Newman/polblogs}.

\bibitem[Airoldi et~al.(2009)Airoldi, Blei, Fienberg, and
  Xing]{airoldi2009mixed}
Airoldi, E.~M., Blei, D.~M., Fienberg, S.~E., and Xing, E.~P.
\newblock Mixed membership stochastic blockmodels.
\newblock In \emph{Advances in Neural Information Processing Systems}, 2009.

\bibitem[Albert \& Barab{\'a}si(2002)Albert and
  Barab{\'a}si]{albert2002statistical}
Albert, R. and Barab{\'a}si, A-L.
\newblock Statistical mechanics of complex networks.
\newblock \emph{Reviews of modern physics}, 74\penalty0 (1):\penalty0 47, 2002.

\bibitem[Barab{\'a}si \& Albert(1999)Barab{\'a}si and
  Albert]{barabasi1999emergence}
Barab{\'a}si, A. and Albert, R.
\newblock Emergence of scaling in random networks.
\newblock \emph{Science}, 286\penalty0 (5439):\penalty0 509--512, 1999.

\bibitem[Bertoin et~al.(2006)Bertoin, Fujita, Roynette, and
  Yor]{bertoin2006particular}
Bertoin, J., Fujita, T., Roynette, B., and Yor, M.
\newblock On a particular class of self-decomposable random variables: the
  durations of bessel excursions straddling independent exponential times.
\newblock \emph{Probability and Mathematical Statistics}, 26:\penalty0
  315--366, 2006.

\bibitem[Bollob{\'a}s(1998)]{bollobas1998random}
Bollob{\'a}s, B.
\newblock \emph{Random graphs}.
\newblock Springer, 1998.

\bibitem[Bollob{\'a}s \& Riordan(2003)Bollob{\'a}s and
  Riordan]{bollobas2003mathematical}
Bollob{\'a}s, B. and Riordan, O.~M.
\newblock Mathematical results on scale-free random graphs.
\newblock \emph{Handbook of graphs and networks: from the genome to the
  internet}, pp.\  1--34, 2003.

\bibitem[Bollob{\'a}s et~al.(2001)Bollob{\'a}s, Riordan, Spencer, and
  Tusn{\'a}dy]{bollobas2001degree}
Bollob{\'a}s, B., Riordan, O., Spencer, J., and Tusn{\'a}dy, G.
\newblock The degree sequence of a scale-free random graph process.
\newblock \emph{Random Structures \& Algorithms}, 18\penalty0 (3):\penalty0
  279--290, 2001.

\bibitem[Bottou(2010)]{bottou2010large}
Bottou, L.
\newblock Large-scale machine learning with stochastic gradient descent.
\newblock In \emph{COMPSTAT}, 2010.

\bibitem[Britton et~al.(2006)Britton, Deijfen, and
  Martin-L{\"o}f]{britton2006generating}
Britton, T., Deijfen, M., and Martin-L{\"o}f, A.
\newblock Generating simple random graphs with prescribed degree distribution.
\newblock \emph{Journal of Statistical Physics}, 124\penalty0 (6):\penalty0
  1377--1397, 2006.

\bibitem[Cai \& Broderick(2015)Cai and Broderick]{cai2015completely}
Cai, D. and Broderick, T.
\newblock Completely random measures for modeling power laws in sparse graphs.
\newblock In \emph{NIPS 2015 Workshop on Networks in the Social and Information
  Sciences}, 2015.

\bibitem[Caron \& Fox(2014)Caron and Fox]{caron2014sparse}
Caron, F. and Fox, E.~B.
\newblock Sparse graphs using exchangeable random measures.
\newblock \emph{arXiv preprint arXiv:1401.1137}, 2014.

\bibitem[Colizza et~al.(2007)Colizza, Pastor-Satorras, and
  Vespignani]{colizza2007reaction}
Colizza, V., Pastor-Satorras, R., and Vespignani, A.
\newblock Reaction--diffusion processes and metapopulation models in
  heterogeneous networks.
\newblock \emph{Nature Physics}, 3\penalty0 (4):\penalty0 276--282, 2007.
\newblock URL
  \url{https://sites.google.com/site/cxnets/usairtransportationnetwork}.

\bibitem[Crane \& Dempsey(2015)Crane and Dempsey]{crane2015atypical}
Crane, H. and Dempsey, W.
\newblock Atypical scaling behavior persists in real world interaction
  networks.
\newblock \emph{arXiv preprint arXiv:1509.08184}, 2015.

\bibitem[Crane \& Dempsey(2016)Crane and Dempsey]{crane2016edge}
Crane, H. and Dempsey, W.
\newblock Edge exchangeable models for network data.
\newblock \emph{arXiv preprint arXiv:1603.04571}, 2016.

\bibitem[Devroye \& James(2014)Devroye and James]{devroye2014simulation}
Devroye, L. and James, L.~F.
\newblock On simulation and properties of the stable law.
\newblock \emph{Statistical methods \& applications}, 23\penalty0 (3):\penalty0
  307--343, 2014.

\bibitem[Dorogovtsev \& Mendes(2002)Dorogovtsev and
  Mendes]{dorogovtsev2002evolution}
Dorogovtsev, S.~N. and Mendes, J. F.~F.
\newblock Evolution of networks.
\newblock \emph{Advances in physics}, 51\penalty0 (4):\penalty0 1079--1187,
  2002.

\bibitem[Ghahramani et~al.(2007)Ghahramani, Griffiths, and Sollich]{GGS2007}
Ghahramani, Z., Griffiths, T.~L., and Sollich, P.
\newblock Bayesian nonparametric latent feature models.
\newblock \emph{Bayesian Statistics}, 8:\penalty0 201--226, 2007.
\newblock See also the discussion and rejoinder.

\bibitem[Hoffman et~al.(2013)Hoffman, Blei, Wang, and
  Paisley]{hoffman2013stochastic}
Hoffman, M.~D., Blei, D.~M., Wang, C., and Paisley, J.~W.
\newblock Stochastic variational inference.
\newblock \emph{Journal of Machine Learning Research}, 14\penalty0
  (1):\penalty0 1303--1347, 2013.

\bibitem[James et~al.(2015)James, Orbanz, and Teh]{james2015scaled}
James, L.~F., Orbanz, P., and Teh, Y.~W.
\newblock Scaled subordinators and generalizations of the {I}ndian buffet
  process.
\newblock \emph{arXiv preprint arXiv:1510.07309}, 2015.

\bibitem[Jordan et~al.(1999)Jordan, Ghahramani, Jaakkola, and
  Saul]{jordan1999introduction}
Jordan, M.~I., Ghahramani, Z., Jaakkola, T.~S., and Saul, L.~K.
\newblock An introduction to variational methods for graphical models.
\newblock \emph{Machine learning}, 37\penalty0 (2):\penalty0 183--233, 1999.

\bibitem[Kemp et~al.(2006)Kemp, Tenenbaum, Griffiths, Yamada, and
  Ueda]{kemp2006learning}
Kemp, C., Tenenbaum, J.~B., Griffiths, T.~L., Yamada, T., and Ueda, N.
\newblock Learning systems of concepts with an infinite relational model.
\newblock In \emph{AAAI}, 2006.

\bibitem[Kingma \& Ba(2015)Kingma and Ba]{kingma2015adam}
Kingma, D.~P. and Ba, J.
\newblock Adam: a method for stochastic optimization.
\newblock In \emph{ICLR}, 2015.

\bibitem[Kingma \& Welling(2014)Kingma and Welling]{kingma2013auto}
Kingma, D.~P. and Welling, M.
\newblock Auto-encoding variational {B}ayes.
\newblock In \emph{ICLR}, 2014.

\bibitem[Knowles(2015)]{knowles2015stochastic}
Knowles, D.~A.
\newblock Stochastic gradient variational {B}ayes for gamma approximating
  distributions.
\newblock \emph{arXiv preprint arXiv:1509.01631}, 2015.

\bibitem[Lee et~al.(2016)Lee, James, and Choi]{lee2016finite}
Lee, J., James, L.~F., and Choi, S.
\newblock Finite-dimensional {BFRY} priors and variational {B}ayesian inference
  for power law models.
\newblock In \emph{Advances In Neural Information Processing Systems}, pp.\
  3162--3170, 2016.

\bibitem[Leskovec \& Mc{A}uley(2012)Leskovec and
  Mc{A}uley]{leskovec2012learning}
Leskovec, J. and Mc{A}uley, J.~J.
\newblock Learning to discover social circles in ego networks.
\newblock In \emph{Advances in Neural Information Processing Systems 25}, 2012.
\newblock URL \url{https://snap.stanford.edu/data/egonets-Facebook.html}.

\bibitem[Miller et~al.(2009)Miller, Jordan, and
  Griffiths]{miller2009nonparametric}
Miller, K., Jordan, M.~I., and Griffiths, T.~L.
\newblock Nonparametric latent feature models for link prediction.
\newblock In \emph{Advances in neural information processing systems}, 2009.

\bibitem[Murray et~al.(2006)Murray, Ghahramani, and MacKay]{murray2006mcmc}
Murray, I., Ghahramani, Z., and MacKay, D. J.~C.
\newblock Mcmc for doubly-intractable distributions.
\newblock In \emph{UAI}, 2006.

\bibitem[Nowicki \& Snijders(2001)Nowicki and Snijders]{nowicki2001estimation}
Nowicki, K. and Snijders, T. A.~B.
\newblock Estimation and prediction for stochastic blockstructures.
\newblock \emph{Journal of the American Statistical Association}, 96\penalty0
  (455):\penalty0 1077--1087, 2001.

\bibitem[Pitman \& Yor(1997)Pitman and Yor]{pitman1997two}
Pitman, J. and Yor, M.
\newblock The two-parameter {P}oisson--{D}irichlet distribution derived from a
  stable subordinator.
\newblock \emph{The Annals of Probability}, pp.\  855--900, 1997.

\bibitem[Robbins \& Monro(1951)Robbins and Monro]{robbins1951stochastic}
Robbins, H. and Monro, S.
\newblock A stochastic approximation method.
\newblock \emph{The Annals of Mathematical Statistics}, 22\penalty0
  (3):\penalty0 400--407, 1951.

\bibitem[Salimans \& Knowles(2013)Salimans and Knowles]{salimans2013fixed}
Salimans, T. and Knowles, D.~A.
\newblock Fixed-form variational posterior approximation through stochastic
  linear regression.
\newblock \emph{Bayesian Analysis}, 8\penalty0 (4):\penalty0 837--882, 2013.

\bibitem[{van der Hofstad}(2016)]{van2016random}
{van der Hofstad}, R.
\newblock \emph{Random graphs and complex networks: Volume 1}.
\newblock Cambridge Series in Statistical and Probabilistic Mathematics.
  Cambridge University Press, 2016.
\newblock URL \url{http://www.win.tue.nl/~rhofstad/NotesRGCN.pdf}.

\bibitem[Veitch \& Roy(2015)Veitch and Roy]{veitch2015class}
Veitch, V. and Roy, D.~M.
\newblock The class of random graphs arising from exchangeable random measures.
\newblock \emph{arXiv preprint arXiv:1512.03099}, 2015.

\end{thebibliography}
\bibliographystyle{icml2017} 


\onecolumn

\icmltitle{Supplementary Material: Bayesian inference on random simple graphs\\
		with power law degree distributions}




\begin{icmlauthorlist}
\icmlauthor{Juho Lee}{pos}
\icmlauthor{Creighton Heaukulani}{cam}
\icmlauthor{Zoubin Ghahramani}{cam,ub}
\icmlauthor{Lancelot F. James}{ust}
\icmlauthor{Seungjin Choi}{pos}
\end{icmlauthorlist}

\icmlaffiliation{pos}{Pohang University of Science and Technology (POSTECH), Pohang, South Korea}
\icmlaffiliation{cam}{University of Cambridge, Cambridge, United Kingdom}
\icmlaffiliation{ub}{Uber AI Labs, San Francisco, CA, USA}
\icmlaffiliation{ust}{Hong Kong University of Science and Technology (HKUST), Clearwater Bay, Hong Kong}

\icmlcorrespondingauthor{Juho Lee}{stonecold@postech.ac.kr}

\icmlkeywords{relational data, network models, scale-free random graphs, variational inference}

\vskip 0.3in



\printAffiliationsAndNotice{}  

\section{Proofs}

We prove \cref{result:main} and \cref{result:grg_scaled_power_law} in the paper.
First consider the following redefinition of our model with slightly different notation; let $W_n$ be a random variable constrained on $(0, \ubound{n}]$, with density
\[
f_n(\dee w) = \frac{1}{Z_n} w^{-\discount-1}(1-e^{-w}) \Indicator{0<w\leq C_n}  \dee w,
\label{eq:W_density}
\]
where $C_1, C_2, \dots, $ is a sequence of positive numbers satisfying
\[
\lim_{n\to\infty} \ubound{n} = \infty, \qquad \lim_{n\to\infty} \ubound{n}^\discount / n= 0.
\label{eq:C_cond}
\]
Note that $Z_n \to \Gamma(1-\discount)/\discount$ as $n\to\infty$, and so the sequence of densities $f_n(dw)$ converges pointwise to the
density of the BFRY distribution
\[
f(w) = \frac{\discount}{\Gamma(1-\discount)}w^{-\discount-1}(1-e^{-w})\Indicator{w > 0},
\]
and $W_n$ converges in distribution to a BFRY random variable. 
Let $W_{n,1}, \dots, W_{n,n}$ be $n$ i.i.d.\ copies of $W_n$. A random simple graph $X$ is then defined to be a collection of Bernoulli random variables as follows:
\[
\label{eq:grgmodel}
\Pr\{ X_{ij}=1 \given r_{i,j}\} = \frac{r_{i,j}}{1 + r_{i,j}}
	,
	\quad r_{i,j} = U_i U_j
	, 
	\quad U_i = \frac{W_{n,i}}{\sqrt{L_n}}
	,
\]
where $L_n \defas \sum_{i=1}^n W_{n,i}$. 
We will write $X \given r \sim \GRGLAW(n,r)$, where $r \defas (r_{i,j} \colon i < j \le n)$.   


We begin with a sequence of Lemmas.
Define a sequence of random variables $V_{s, n}$, for every $s, n \ge 1$, by
\[
V_{s,n} \defas \frac{W_n}{\ubound{n}^{s-\discount}}.
\]
Let $V_{s,n,1}, \dots, V_{s,n,n}$ be $n$ i.i.d.\ copies of $V_{s,n}$, and denote the empirical mean of these copies by
\[
\bar V_{s,n}
	\defas \frac 1 n
	\sum_{i=1}^n
	V_{s,n,i}
	.
	\label{eq:Vmeandef}
\]
The expectation of $V_{s,n}$ is finite for all $s, n < \infty$, and is computed as
\[
\EE[V_{s,n}] 
	&= \frac{1}{Z_n \ubound{n}^{s-\discount}} 
		\int_0^{\ubound{n}} w^{s-\discount-1}(1-e^{-w}) \dee w 
		\nonumber 
		\\
	&= \frac{1}{Z_n}\bigg\{ 
		\frac{1}{s-\discount} 
		- \frac{\gamma(s-\discount, \ubound{n})}{\ubound{n}^{s-\discount}}
		\bigg\}
	,
\]
where $\gamma(\cdot, \cdot)$ is the lower incomplete gamma function.

Let $\pto$ denote convergence in probability.  The following lemma is a standard mean convergence result:
%
\begin{lem}
$\bar V_{s,n} \pto \EE[V_{s,n}]$, as $n\to \infty$. 
\label{lem:wlln}
\end{lem}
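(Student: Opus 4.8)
The plan is to prove this by a second-moment (Chebyshev) argument rather than by appealing to a classical weak law of large numbers. The subtlety is that this is a triangular array: although $V_{s,n,1},\dots,V_{s,n,n}$ are \iid\ for each fixed $n$, their common law depends on $n$ through $\ubound{n}$, so the per-summand variance is not constant in $n$, and the whole point is to show that it does not grow too quickly relative to the number of terms being averaged.

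Since the $V_{s,n,i}$ are \iid\ copies of $V_{s,n}$, we have $\EE[\bar V_{s,n}] = \EE[V_{s,n}]$ and $\mathrm{Var}(\bar V_{s,n}) = n^{-1}\mathrm{Var}(V_{s,n}) \le n^{-1}\EE[V_{s,n}^2]$. Chebyshev's inequality therefore gives, for every $\epsilon>0$,
\[
\Pr\{\,|\bar V_{s,n} - \EE[V_{s,n}]| > \epsilon\,\} \le \frac{\EE[V_{s,n}^2]}{n\,\epsilon^2},
\]
so it suffices to show that $\EE[V_{s,n}^2]/n \to 0$ as $n\to\infty$.

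The main step is to control the second moment. Computing it in the same manner as the displayed formula for $\EE[V_{s,n}]$ --- now integrating $w^{2s-\discount-1}$ and dividing by $\ubound{n}^{2(s-\discount)}$ --- and then bounding $1-e^{-w}\le 1$ gives
\[
\EE[V_{s,n}^2] &= \frac{1}{Z_n\,\ubound{n}^{2(s-\discount)}} \int_0^{\ubound{n}} w^{2s-\discount-1}(1-e^{-w})\,\dee w \\
&\le \frac{1}{Z_n\,\ubound{n}^{2(s-\discount)}}\cdot\frac{\ubound{n}^{2s-\discount}}{2s-\discount} = \frac{\ubound{n}^{\discount}}{Z_n\,(2s-\discount)}.
\]
Because $Z_n \to \Gamma(1-\discount)/\discount > 0$, the factor $1/Z_n$ is bounded in $n$, so $\EE[V_{s,n}^2] = O(\ubound{n}^{\discount})$.

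Finally I would combine the two bounds: $\EE[V_{s,n}^2]/n = O(\ubound{n}^{\discount}/n)$, which tends to $0$ precisely by the growth condition $\ubound{n}^{\discount}/n \to 0$ from \cref{eq:C_cond}. Substituting into the Chebyshev bound yields $\Pr\{|\bar V_{s,n}-\EE[V_{s,n}]|>\epsilon\}\to 0$ for every $\epsilon>0$, which is the claimed convergence in probability. The only real obstacle is conceptual rather than computational: one must notice that, because the summands' law changes with $n$, the result is not a generic law of large numbers but rests entirely on the second moment scaling like $\ubound{n}^{\discount}$, at which point the assumption $\ubound{n}^{\discount}/n\to 0$ does exactly the required work.
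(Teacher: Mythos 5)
Your proposal is correct and follows essentially the same route as the paper: Chebyshev's inequality, bounding the variance by the second moment, and showing $\EE[V_{s,n}^2]/n \to 0$ via the condition $\ubound{n}^{\discount}/n \to 0$. The only cosmetic difference is that you bound $1-e^{-w}\le 1$ to get $\EE[V_{s,n}^2]=O(\ubound{n}^{\discount})$, whereas the paper carries the exact lower incomplete gamma term; both yield the same conclusion.
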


\begin{proof}
For all $\varepsilon > 0$, by Chebyshev's inequality and the condition in \cref{eq:C_cond},
\[
\Pr\{ \vert \bar V_{s,n} - \EE[V_{s,n}] \vert \ge \varepsilon\} &\leq \frac{\mathrm{Var}(V_{s,n})}{n \varepsilon^2} 
\leq \frac{\EE[V_{s,n}^2]}{n \varepsilon^2} 
= \frac{1}{Z_n \varepsilon^2} \bigg\{ \frac{\ubound{n}^\discount}{n(2s-\discount)}
- \frac{\gamma(2s-\discount, \ubound{n})}{n \ubound{n}^{2s-2\discount}}\bigg\} \to 0,
\]
as $n \to \infty$, as desired.
\end{proof}

The following lemma will be used to study various higher order moments in later results:
%
\begin{lem}
For $s\geq 2$, 
\[
\ratio_{s,n} \defas 
	\frac{\sum_{i=1}^n W_{n,i}^s}{(\sum_{i=1}^n W_{n,i})^s} 
	\pto 0
	,
	\qquad
	\text{as }
	n\to \infty
	.
\]
\label{lem:R}
\end{lem}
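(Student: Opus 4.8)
The plan is to reduce the statement to the mean-convergence result already established in \cref{lem:wlln} by rescaling the numerator and denominator in the same units. First I would rewrite the two sums in terms of the normalized empirical means: directly from the definition in \cref{eq:Vmeandef},
\[
\sum_{i=1}^n W_{n,i}^s = n\,\ubound{n}^{s-\discount}\,\bar V_{s,n}
	, \qquad
\sum_{i=1}^n W_{n,i} = n\,\ubound{n}^{1-\discount}\,\bar V_{1,n}
	.
\]
Substituting these into the definition of $\ratio_{s,n}$ and collecting the powers of $n$ and $\ubound{n}$ gives
\[
\ratio_{s,n}
	= \frac{n\,\ubound{n}^{s-\discount}\,\bar V_{s,n}}
		{\bigl(n\,\ubound{n}^{1-\discount}\,\bar V_{1,n}\bigr)^s}
	= \frac{\bar V_{s,n}}{(\bar V_{1,n})^s}
		\left(\frac{\ubound{n}^{\discount}}{n}\right)^{s-1}
	,
\]
where the exponent of $\ubound{n}$ collapses because $(s-\discount)-s(1-\discount)=\discount(s-1)$. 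This factorization is the crux of the argument: it separates a random prefactor from a purely deterministic one.

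Next I would control the two factors separately. For the random factor, \cref{lem:wlln} gives $\bar V_{s,n}\pto \EE[V_{s,n}]$ and $\bar V_{1,n}\pto \EE[V_{1,n}]$; moreover, since $Z_n\to\Gamma(1-\discount)/\discount$ and the incomplete-gamma term $\gamma(s-\discount,\ubound{n})/\ubound{n}^{s-\discount}\to 0$ as $n\to\infty$ (using the first condition in \cref{eq:C_cond}), these expectations converge to the strictly positive constants $\mu_s\defas \discount/\{(s-\discount)\Gamma(1-\discount)\}$. Hence $\bar V_{s,n}\pto\mu_s$ and $\bar V_{1,n}\pto\mu_1>0$, and by the continuous mapping theorem (the map $(x,y)\mapsto x/y^s$ is continuous at $(\mu_s,\mu_1)$ precisely because $\mu_1>0$) we obtain $\bar V_{s,n}/(\bar V_{1,n})^s\pto \mu_s/\mu_1^s$, a finite constant. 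For the deterministic factor, the second condition in \cref{eq:C_cond} gives $\ubound{n}^{\discount}/n\to 0$, and since $s\ge 2$ implies $s-1\ge 1$, we have $(\ubound{n}^{\discount}/n)^{s-1}\to 0$.

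Finally I would combine the two: a sequence converging in probability to a finite constant, multiplied by a deterministic sequence tending to $0$, converges in probability to $0$ (the random factor is tight, so the product is squeezed to $0$). This yields $\ratio_{s,n}\pto 0$, as claimed. I expect the only genuine subtlety — the ``hard part'' — to be the bookkeeping that isolates the clean deterministic factor $(\ubound{n}^{\discount}/n)^{s-1}$, together with verifying that the limiting denominator constant $\mu_1$ is strictly positive so that the ratio $\bar V_{s,n}/(\bar V_{1,n})^s$ does not degenerate; once these are in place, the conclusion follows by a direct appeal to \cref{lem:wlln} and \cref{eq:C_cond}.
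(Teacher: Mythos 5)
Your proof is correct and follows essentially the same route as the paper's: rewrite $\ratio_{s,n}$ as $(\ubound{n}^{\discount}/n)^{s-1}\,\bar V_{s,n}/\bar V_{1,n}^{s}$, kill the deterministic factor via \cref{eq:C_cond}, and control the random factor via \cref{lem:wlln}. The paper states the second step more tersely (``converges to a constant in probability''); you usefully make explicit the positivity of the limiting denominator and the continuous-mapping step, but the argument is the same.
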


\begin{proof}
We have
\[
\ratio_{s,n} = \frac{ n \ubound{n}^{s-\discount} \bar V_{s,n}}{ n^s \ubound{n}^{s-s\discount} \bar V_{1,n}^s} = 
\bigg(\frac{\ubound{n}^\discount}{n}\bigg)^{s-1} \frac{\bar V_{s,n}}{\bar V_{1,n}^s}.
\]
As $n\to \infty$, the first factor on the right hand side clearly converges to zero (c.f.~\cref{eq:C_cond}), and, by \cref{lem:wlln}, the second term
converges to a constant in probability. 
\end{proof}

Recall that $\degree{i} \defas \sum_{j\ne i} X_{i,j}$ is the degree of the $i$-th node in the graph $X \given r \dist \GRGLAW(n,r)$, given by \cref{eq:grgmodel}.
The following result will show up in later calculations involving the probability generating function (PGF) of the degree random variables $\degree{i}$:
%
\begin{lem}
For every collection $t_1, \dotsc, t_n$ with $\vert t_i \vert \leq 1$, for $i\le n$,
\[
\EE \Bigl [ 
	\prod_{i=1}^n t_i^{\degree{i}} \given W_{n,1} = w_{1} ,\dotsc, W_{n,n} = w_{n}
	\Bigr ]
	= \prod_{i < j \leq n } 
		\frac{L_n + t_i t_j w_i w_j}{L_n + w_i w_j},
\]
for positive $w_1, \dotsc, w_n$.
\label{lem:pgf_degree}
\end{lem}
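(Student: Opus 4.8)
The plan is to work entirely conditional on the event $\{W_{n,1}=w_1,\dots,W_{n,n}=w_n\}$, under which the normalizer $L_n=\sum_i w_i$ is a fixed constant and, by the definition of $\GRGLAW(n,r)$ in \cref{eq:grgmodel}, the edge indicators $(X_{i,j})_{i<j\le n}$ are independent $\Bernoulli(p_{i,j})$ variables with success probabilities $p_{i,j}=r_{i,j}/(1+r_{i,j})$ and $r_{i,j}=w_iw_j/L_n$. The whole computation then reduces to a factorization over edges followed by an elementary one-edge generating-function evaluation.

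First I would rewrite the product of degree powers as a product over edges. Since $\degree{i}=\sum_{j\ne i}X_{i,j}$ and the graph is undirected ($X_{i,j}=X_{j,i}$), each unordered pair $\{i,j\}$ contributes $X_{i,j}$ to $\degree{i}$ and the same $X_{i,j}$ to $\degree{j}$, so that
\[
\prod_{i=1}^n t_i^{\degree{i}}
  = \prod_{i=1}^n \prod_{j\ne i} t_i^{X_{i,j}}
  = \prod_{i<j\le n} (t_i t_j)^{X_{i,j}}.
\]
This is the only step requiring any bookkeeping, and it is where the symmetry of the adjacency matrix is used; everything downstream is mechanical.

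Next I would take the conditional expectation and use the conditional independence of the edges to factor it as $\prod_{i<j\le n}\EE[(t_it_j)^{X_{i,j}}\given W=w]$. Each factor is the probability generating function of a single Bernoulli evaluated at $t_it_j$; writing $\EE[a^{X_{i,j}}\given W=w]=(1-p_{i,j})+p_{i,j}\,a$ with $a=t_it_j$, $p_{i,j}=r_{i,j}/(1+r_{i,j})$ and $1-p_{i,j}=1/(1+r_{i,j})$ gives $(1+r_{i,j}\,t_it_j)/(1+r_{i,j})$. Substituting $r_{i,j}=w_iw_j/L_n$ and clearing the common factor $L_n$ turns each factor into $(L_n+t_it_j w_iw_j)/(L_n+w_iw_j)$, which is exactly the claimed expression.

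I do not anticipate any genuine obstacle here: the degrees are bounded by $n-1$, so the relevant products and expectations are all finite (the hypothesis $\vert t_i\vert\le 1$ is needed only to guarantee a bona fide generating function), and no limiting argument is involved. The one point to state carefully is the edge-to-node symmetry used in the first display; once that is in place, the result follows from conditional independence and the Bernoulli PGF alone.
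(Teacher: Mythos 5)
Your proof is correct and complete: the edge-to-node rewriting $\prod_i t_i^{\degree{i}}=\prod_{i<j\le n}(t_it_j)^{X_{i,j}}$, conditional independence of the Bernoulli edges given the weights, and the one-edge PGF evaluation with $r_{i,j}=w_iw_j/L_n$ yield exactly the stated product. The paper does not spell this argument out at all---it simply defers to \citet{britton2006generating}, where the same elementary computation is carried out---so your writeup supplies precisely the proof being cited.
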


\begin{proof}
The proof is given by \citet{britton2006generating}.
\end{proof}

The following result studies a representation of the PGF of the degree random variables and their higher order moments:
%
\begin{lem}
Fix a node $k\le n $. Define 
\[
\pgf{n,k} (t; w_k) \defas \prod_{i\neq k} \frac{L_{n,-k} + w_k + t w_k W_{n,i}}{L_{n,-k} + w_k + w_k W_{n,i}}
	,
	\qquad 
	\text{for }
	\vert t \vert \leq 1
	,
	\text{ and }
	w_k > 0
	,
\]
where $L_{n,-k} \defas \sum_{i\neq k} W_{n,i}$.
Note that the $s$-th derivative $\mpgf{n,k}{s}(t; w_k)$ exists for all $s \geq 0$. 
For all $s\geq 0$, the following hold:
\begin{enumerate}[nolistsep]
\item $\mpgf{n,k}{s}(t; w_k)$ is uniformly bounded, for all $n\ge 1$; 
\item $\mpgf{n,k}{s}(t; w_k) \pto w_k^{s} \exp\{(t-1) w_k\}$, as $n\to \infty$.
\end{enumerate}
\label{lem:cond_converge}
\end{lem}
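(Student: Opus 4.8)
The plan is to recognize $\pgf{n,k}(t;w_k)$ as the conditional probability generating function of the degree $\degree{k}$, given the weights, with $W_{n,k}$ fixed at $w_k$. Indeed, setting $t_k=t$ and $t_i=1$ for $i\ne k$ in \cref{lem:pgf_degree} collapses every factor not involving node $k$, leaving exactly $\pgf{n,k}(t;w_k)=\prod_{i\ne k}\bigl(1-(1-t)p_{k,i}\bigr)$, where $p_{k,i}\defas w_k W_{n,i}/(L_n+w_k W_{n,i})$ is the (random) probability of an edge between $k$ and $i$ and $L_n=L_{n,-k}+w_k$. Thus $\pgf{n,k}$ is the PGF of a sum of $n-1$ conditionally independent Bernoulli variables, and the whole lemma amounts to a conditional Poisson limit theorem with mean $w_k$. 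First I would reduce everything to three estimates on these edge probabilities: (i) $\sum_{i\ne k}p_{k,i}\pto w_k$; (ii) $\max_{i\ne k}p_{k,i}\pto 0$; and (iii) $\sum_{i\ne k}p_{k,i}^m\pto 0$ for every $m\ge 2$.

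These three estimates are where the earlier lemmas enter. Using $p_{k,i}\le w_k W_{n,i}/L_n$ and $L_n=\sum_i W_{n,i}$, claim (iii) is immediate from $\sum_{i}p_{k,i}^m\le w_k^m(\sum_i W_{n,i}^m)/(\sum_i W_{n,i})^m=w_k^m\,\ratio_{m,n}\pto 0$ by \cref{lem:R}. For (i), I would split $p_{k,i}=w_k W_{n,i}/L_n-w_k^2 W_{n,i}^2/\bigl(L_n(L_n+w_k W_{n,i})\bigr)$; the leading term sums to $w_k L_{n,-k}/L_n=w_k(1-w_k/L_n)$, which tends to $w_k$ once I argue $L_n\pto\infty$ (from $L_n=n\,\ubound{n}^{1-\discount}\bar V_{1,n}$, \cref{lem:wlln}, and \cref{eq:C_cond}), while the correction is bounded by $w_k^2\,\ratio_{2,n}\pto 0$. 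For (ii), $\max_{i\ne k}p_{k,i}\le w_k\ubound{n}/L_n=w_k\ubound{n}^{\discount}/(n\bar V_{1,n})$, which vanishes in probability since $\ubound{n}^{\discount}/n\to 0$ by \cref{eq:C_cond} and $\bar V_{1,n}$ converges to a positive constant by \cref{lem:wlln}.

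For statement (1), I would use that $\pgf{n,k}(t;w_k)=\sum_{y\ge 0}\Pr\{\degree{k}=y\mid W\}\,t^y$ has nonnegative coefficients, so for $\vert t\vert\le 1$ the $s$-th derivative is bounded in absolute value by its value at $t=1$, namely the $s$-th conditional factorial moment $\EE[(\degree{k})_s\mid W]=s!\,e_s(p_{k,\cdot})$, where $e_s$ is the $s$-th elementary symmetric polynomial in $(p_{k,i})_{i\ne k}$. The inequality $e_s\le(\sum_i p_{k,i})^s/s!$ together with $\sum_{i\ne k}p_{k,i}\le w_k$ (which holds deterministically, since $L_{n,-k}\le L_n$) then gives the clean, $n$-free bound $\vert\mpgf{n,k}{s}(t;w_k)\vert\le w_k^s$.

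For statement (2), I would work on the high-probability event $\{\max_{i\ne k}p_{k,i}<\tfrac12\}$ (of probability tending to $1$ by (ii)), on which every factor of $\pgf{n,k}$ is positive and $G\defas\log\pgf{n,k}$ is well defined. A Taylor expansion of $\log(1-x)$ combined with (i) and (iii) gives $G(t)\pto(t-1)w_k$, and the explicit derivatives $G^{(m)}(t)=(m-1)!\sum_{i\ne k}p_{k,i}^m/(1-(1-t)p_{k,i})^m$ yield $G'(t)\pto w_k$ and $G^{(m)}(t)\pto 0$ for $m\ge 2$. Feeding these into the Fa\`a di Bruno / complete Bell polynomial identity $\mpgf{n,k}{s}=\pgf{n,k}\cdot B_s(G',\dots,G^{(s)})$ and using $B_s(w_k,0,\dots,0)=w_k^s$ with continuity of $B_s$ and $\exp$ delivers $\mpgf{n,k}{s}(t;w_k)\pto w_k^s e^{(t-1)w_k}$. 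The main obstacle is exactly this passage from the PGF itself ($s=0$) to its derivatives: the derivatives are sensitive to the low-probability event where some $p_{k,i}$ is not small, so the argument must be localized to the event in (ii), and one must check that the Bell-polynomial bookkeeping is uniform enough that the individually vanishing higher-order sums $\sum_{i\ne k}p_{k,i}^m$ genuinely annihilate all but the leading term; pinning down the scaling $L_n\asymp n\,\ubound{n}^{1-\discount}$ firmly enough to drive (i)--(iii) is the other delicate point.
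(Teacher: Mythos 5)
Your proof is correct, and while it rests on the same two ingredients as the paper (\cref{lem:wlln} and \cref{lem:R}), it is organized along a genuinely different route. You recast $\pgf{n,k}(t;w_k)=\prod_{i\ne k}\bigl(1-(1-t)p_{k,i}\bigr)$ as the PGF of a sum of conditionally independent Bernoulli variables and prove a conditional law of small numbers; the paper instead manipulates the product directly. For part (1) your argument is cleaner than the paper's: bounding $\vert\mpgf{n,k}{s}(t;w_k)\vert$ by the $s$-th factorial moment $s!\,e_s(p_{k,\cdot})\le(\sum_{i\ne k}p_{k,i})^s\le w_k^s$ gives a deterministic, $n$-free bound, whereas the paper derives uniform boundedness by induction on $s$ through an explicit recursion $\mpgf{n,k}{s}=w_k\mpgf{n,k}{s-1}\ingf{1,n,k}+\sum_{r=2}^{s}a_{s,r}\mpgf{n,k}{s-r}\ingf{r,n,k}$ together with $\ingf{r,n,k}\le 1$. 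For part (2) the two arguments are essentially the same computation in different clothing: your $G^{(m)}$ equals, up to sign and a factor $w_k^m$, the paper's $\ingf{m,n,k}$, and the Fa\`a di Bruno identity $\mpgf{n,k}{s}=\pgf{n,k}\,B_s(G',\dots,G^{(s)})$ is the closed form of the paper's recursion; both reduce to $\ingf{1,n,k}\pto 1$ and $\ingf{r,n,k}\pto 0$ for $r\ge 2$, which are exactly your estimates (i)--(iii). Two small touch-ups: your formula for $G^{(m)}$ is missing a factor $(-1)^{m-1}$ (harmless, since only $\vert G^{(m)}\vert\pto 0$ is needed for $m\ge 2$), and you should localize to $\{\max_{i\ne k}p_{k,i}<c\}$ for some fixed $c<1/2$ so that $1-(1-t)p_{k,i}\ge 1-2c>0$ uniformly in $\vert t\vert\le 1$; this event still has probability tending to one by your estimate (ii).
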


\begin{proof}
In the case $s=0$, $\pgf{n,k}(t; w_k)$ is trivially bounded by 1 since $\vert t \vert\leq 1$. By the Taylor series expansion $\log(1 + x) = x + O(x^2)$, we have
\[
\pgf{n,k}(t; w_k) = \exp\bigg\{ (t-1)w_k \frac{L_{n,-k}}{L_{n,-k} + w_k} 
		+ O\bigg(
			w_k^2 \frac{\sum_{i\neq k} W_{n,i}^2}{(L_{n,-k} + w_k)^2}
		\bigg)
		\bigg\}.
\]
By \cref{lem:wlln},
\[
\frac{L_{n,-k}}{L_{n,-k} + w_k} 
	= \frac{\bar V_{1,n,-k}}
		{\bar V_{1,n,-k} + w_k/(n-1)/\ubound{n}^{1-\discount}} 
 	\pto 1
	,
\]
where $\bar V_{1,n,-k}$ is the empirical mean in \cref{eq:Vmeandef} excluding the element $V_{1,n,k}$. Furthermore, by \cref{lem:R},
\[
O\bigg( w_k^2 \frac{\sum_{i\neq k} W_{n,i}^2}{(L_{n,-k} + w_k)^2}
	\bigg) 
	\leq
	O( w_k^2 \ratio_{2,n,-k}) 
	\pto 0
	,
\]
where $\ratio_{s,n,-k}$ is $\ratio_{s,n}$ computed without $V_{s,n,k}$. 
Combining, we have 
\[
\pgf{n,k}(t; w_k) \pto \exp\{(t-1)w_k\}.
\label{eq:F_0}
\]

Before proceeding for $s\geq 1$, we define
\[
\ingf{r,n,k}(t;w_k) \defas \sum_{i\neq k} \frac{W_{n,i}^r}{(L_{n,-k} + w_k + t w_k W_{n,i})^r},
\]
for all $r,n \geq 1$. One can easily see that $\ingf{r,n,k}(t;w_k)\leq 1$ for all $r,n\geq 1$. For $r=1$, we have
\[
\sum_{i\neq k} \frac{W_{n,i}}{L_{n,-k} + w_k + t w_k C_n} \leq \ingf{1,n,k}(t;w_k) \leq 1,
\]
and
\[
\sum_{i\neq k} \frac{W_{n,i}}{L_{n,-k} + w_k + t w_k C_n} &=
\frac{1}{1 + w_k/L_{n,-k} + t w_k C_n/L_{n,-k}} \nonumber\\
&= \bigg\{ 1 + \frac{w_k}{(n-1)C_n^{1-\alpha}} \bar V_{s,n,-k}^{-1} 
+ t w_k \frac{C_n^\alpha}{n}\frac{n}{n-1} \bar V^{-1}_{s,n,-k}\bigg\}^{-1} \pto 1.
\label{eq:Q_1}
\]
Hence, by the squeeze theorem, $\ingf{1,n,k}(t;w_k) \pto 1$. For $r \geq 2$, we have
\[
0 \leq \ingf{r,n,k}(t;w_k) \leq M_{r,n,-k} \pto 0,
\label{eq:Q_2}
\]
by \cref{lem:R}. Hence, we have $\ingf{r,n,k}(t;w_k)\pto 0$ for $r \geq 2$.

Now we show that
\[
\mpgf{n,k}{s}(t;w_k) = w_k \mpgf{n,k}{s-1}(t;w_k) \ingf{1,n,k}(t;w_k) + \sum_{r=2}^s a_{s,r} \mpgf{n,k}{s-r}(t;w_k) \ingf{r,n,k}(t;w_k),
\label{eq:F_derv}
\]
for some constants $\{a_{s,r}\}$ for all $s\geq 1$ and $r \geq 2$. We proceed by the mathematical induction. For $s=1$, 
\[
\mpgf{n,k}{1}(t;w_k) &= \sum_{i\neq k} \frac{w_k W_{n,i}}{L_{n,-k} + w_k + w_k W_{n,i}}
\prod_{j\neq i,k} \frac{L_{n,-k} + w_k + t w_k W_{n,j}}{L_{n,-k} + w_k + w_k W_{n,j}} \nonumber\\
&= w_k F_{n,k}(t;w_k) \ingf{1,n,k}(t;w_k).
\]
Now by the inductive hypothesis, 
\[
\mpgf{n,k}{s+1}(t;w_k) &= w_k \mpgf{n,k}{s}(t;w_k) \ingf{1,n,k}(t;w_k) - w_k^2 \mpgf{n,k}{s-1}(t;w_k) \ingf{2,n,k}(t;w_k) \nonumber\\
& + \sum_{r=2}^s a_{s,r} (\mpgf{n,k}{s+1-r}(t;w_k) \ingf{r,n,k}(t;w_k) - r w_k \mpgf{n,k}{s-r} \ingf{r+1,n,k}(t;w_k)) \nonumber\\
&= w_k \mpgf{n,k}{s}(t;w_k) \ingf{1,n,k}(t;w_k) + \sum_{r=2}^{s+1} a_{s+1,r} \mpgf{n,k}{s+1-r}(t;w_k) \ingf{r,n,k}(t;w_k),
\]
where
\[
a_{s+1,2} = a_{s,2} - w_k^2 , \quad a_{s+1,r} = a_{s,r} - a_{s,r-1} (r-1)  w_k \quad \textrm{ for } r \geq 2,
\]
so the inductive argument holds. 

Having \eqref{eq:F_derv}, by mathematical induction, we can easily show that $\mpgf{n,k}{s}(t;w_k)$ is uniformly bounded for all $s,n\geq 1$.
Moreover,
\[
\mpgf{n,k}{1}(t;w_k) = w_k F_{n,k}(t;w_k) \ingf{1,n,k}(t;w_k) \pto w_k \exp\{(t-1)w_k\},
\]
by \eqref{eq:F_0} and \eqref{eq:Q_1}. Combining this with \eqref{eq:Q_2}, by mathematical induction, we can show that for all $s \geq 1$,
\[
\mpgf{n,k}{s}(t;w_k) \pto w_k^s \exp\{(t-1)w_k\}.
\]

\ifx
In the case $s=1$, we have
\[
\mpgf{n,k}{1}(t; w_k)
	&= \sum_{i\neq k} 
		\frac{w_k W_{n,i}}{L_{n,-k} + w_k + w_k W_{n,i}} 
		\prod_{j\neq i, k} \frac{L_{n,-k} + w_k + t w_k W_{n,j}}{L_{n,-k} + w_k + w_k W_{n,j}}
		\nonumber
		\\
	&= w_k \pgf{n,k}(t; w_k)
		\sum_{i\neq k} \frac{W_{n,i}}{L_{n,-k} + w_k + tw_k W_{n,i}}
	.
\]
Note that
\[
\mpgf{n,k}{1}(t; w_k) 
	\leq w_k \pgf{n,k}(t; w_k) 
	\sum_{i\neq k} \frac{W_{n,i}}{L_{n,-k}} 
	= w_k \pgf{n,k}(t; w_k)
	,
\]
and $\mpgf{n,k}{1}(t; w_k)$ is therefore uniformly bounded by $w_k$. Also note that
\[
\mpgf{n,k}{1}(t; w_k) 
	 &\geq w_k \pgf{n,k}(t; w_k)
	\sum_{i\neq k} 
	\frac{W_{n,i}}{L_{n,-k} + w_k + t w_k \ubound{n}}
	\nonumber 
	\\
	& = \frac{w_k \pgf{n,k}(t; w_k) L_{n,-k}}{L_{n,-k} + w_k + t w_k \ubound{n}}
	\nonumber
	\\
	& = \frac{w_k \pgf{n,k}(t; w_k)}{1 + \frac{w_k}{(n-1) \ubound{n}^{1-\discount}} 
		\bar V_{s,n,-k}^{-1} + t w_k \frac{\ubound{n}^\discount}{n}\frac{n}{n-1} \bar V_{s,n,-k}^{-1} } 
		\pto 
		w_k \exp\{(t-1)w_k\}
		.
	\label{eq:first_derivative}
\]
Then by the squeeze theorem, we have $\mpgf{n,k}{1}(t; w_k) \pto w_k \exp\{(t-1)w_k\}$.

Finally, for the case $s\geq 2$, we will first show by induction that
\[
\mpgf{n,k}{s}(t; w_k) 
	= w_k \mpgf{n,k}{s-1}(t; w_k) 
		 \ingf{1,n,k}(t; w_k) 
		+ \sum_{r=2}^s a_{s,r} w_k^r \mpgf{n,k}{s-r}(t; w_k) \ingf{r,n,k}(t; w_k)
	,
	\label{eq:recurse}
\]
for some constants $\{a_{s,r}\}_{r=2}^s$ and where
\[
\ingf{r,n,k}(t ; w_k) \defas 
	\sum_{i\neq k} \frac{W_{n,i}^r}{(L_{n,-k} + w_k + t w_k W_{n,i})^r}
	.
\]
The base case $s=2$ is trivial. 
By the inductive hypothesis,
\[
\pgf{n,k}{s+1}(t; w_k) &= w_k \mpgf{n,k}{s}(t; w_k) \ingf{1,n,k}(t; w_k) 
	- w_k^2 \mpgf{n,k}{s-1}(t; w_k) \ingf{2,n,k}(t; w_k)
	\\
	&\qquad \qquad
	 + \sum_{r=2}^s a_{s,r}( w_k^r  \ingf{r,n,k}(t; w_k) 
	 		- w_k^{r+1} \mpgf{n,k}{s-r}(t; w_k) \ingf{r+1,n,k}(t; w_k))
		,
\]
and so \cref{eq:recurse} holds.
\PROBLEM{I don't understand from here onwards.}
Since $\ingf{r,n,k}(t; w_k) \leq 1$ for all $r, n$, it is straightforward to show with another inductive argument that
$\mpgf{n,k}{s}(t; w_k)$ is uniformly bounded for all $s$ and $t$. 
Also note that, by \cref{lem:R},
\[
\ingf{r,n,k}(t; w_k) \leq \ratio_{r,n,-k} \pto 0,
\]
for all $r \geq 2$. The base case $\ingf{1,n,k}(t; w_k) \pto 1$ was shown in \eqref{eq:first_derivative}. Combining,
we have $\mpgf{n,k}{s}(t; w_k) \pto w_k^s\exp\{(t-1) w_k\}$.
\fi
\end{proof}

We will now use our collected results to analyze the asymptotic distribution of the degree random variables; the following result characterizes this distribution:
%
\begin{lem}
Fix a node $k$. Given $\{W_{n,k}=w_k\}$, for some $w_k>0$, the degree $\degree{k}$ of node $k$ converges in distribution to a Poisson random variable with rate $w_k$, as $n\to \infty$.
\label{lem:cond_dist}
\end{lem}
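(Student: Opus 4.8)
The plan is to recognize $\pgf{n,k}(t;w_k)$ from \cref{lem:cond_converge} as the (random) conditional probability generating function (PGF) of the degree $\degree{k}$, and then to pass to the limit using the convergence already established there. Concretely, given the entire weight sequence $W_{n,1},\dots,W_{n,n}$, the model in \cref{eq:grgmodel} makes the indicators $X_{k,j}$, $j\ne k$, independent $\Bernoulli(p_{k,j})$ variables with $p_{k,j}=r_{k,j}/(1+r_{k,j})$, so $\degree{k}=\sum_{j\ne k}X_{k,j}$ has the product-form PGF
\[
\EE\bigl[t^{\degree{k}}\given W_{n,1},\dots,W_{n,n}\bigr]=\prod_{j\ne k}\frac{1+t\,r_{k,j}}{1+r_{k,j}}.
\]
Substituting $r_{k,j}=w_k W_{n,j}/L_n$ and $L_n=L_{n,-k}+w_k$ on the event $\{W_{n,k}=w_k\}$ shows this product is precisely $\pgf{n,k}(t;w_k)$. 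Since the copies $(W_{n,i})_{i\ne k}$ are independent of $W_{n,k}$, averaging over them gives the conditional PGF of interest,
\[
\EE\bigl[t^{\degree{k}}\given W_{n,k}=w_k\bigr]=\EE\bigl[\pgf{n,k}(t;w_k)\bigr],
\]
where the outer expectation is over the $n-1$ \iid\ copies $(W_{n,i})_{i\ne k}$.

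Next I would take the limit $n\to\infty$. For $t\in[0,1]$ every factor of $\pgf{n,k}(t;w_k)$ lies in $[0,1]$, so $\pgf{n,k}(t;w_k)\in[0,1]$; by the $s=0$ case of \cref{lem:cond_converge} it also converges in probability to $\exp\{(t-1)w_k\}$. Bounded convergence then yields $\EE[\pgf{n,k}(t;w_k)]\to\exp\{(t-1)w_k\}$ for every $t\in[0,1]$. As $\exp\{(t-1)w_k\}$ is the PGF of a Poisson variable with rate $w_k$, the continuity theorem for PGFs of $\mathbb{Z}_{\ge0}$-valued random variables gives $\degree{k}\given\{W_{n,k}=w_k\}\dto\mathrm{Poisson}(w_k)$, as claimed. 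An equivalent route, which also explains why \cref{lem:cond_converge} bothers to control all derivatives $\mpgf{n,k}{s}$, is to extract the conditional pmf via $\Pr\{\degree{k}=y\given W_{n,k}=w_k\}=\frac1{y!}\EE[\mpgf{n,k}{y}(0;w_k)]$ and apply bounded convergence to each derivative, recovering the Poisson pmf $w_k^y e^{-w_k}/y!$ directly.

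The hard part is simply justifying the interchanges of limit/derivative and expectation. In the PGF route this reduces to the single passage $\EE[\pgf{n,k}(t;w_k)]\to\exp\{(t-1)w_k\}$, which is immediate from the uniform bound and the convergence in probability supplied by \cref{lem:cond_converge}. In the derivative route one additionally needs $\frac{d^y}{dt^y}\EE[\pgf{n,k}(t;w_k)]=\EE[\mpgf{n,k}{y}(t;w_k)]$; here the uniform boundedness of all derivatives (part~1 of \cref{lem:cond_converge}) is exactly the dominating condition that legitimizes differentiating under the expectation. Everything else is routine.
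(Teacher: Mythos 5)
Your proof is correct, and its primary route is genuinely leaner than the paper's. The paper works at the level of the probability mass function: it writes $\Pr\{\degree{k}=s\given w_k\}$ in terms of the $s$-th derivative of the conditional PGF at $t=0$, and then uses \emph{both} parts of \cref{lem:cond_converge} (uniform boundedness of every $\mpgf{n,k}{s}$ plus its convergence in probability to $w_k^s e^{(t-1)w_k}$) together with uniform integrability to pass the limit inside $\EE[\mpgf{n,k}{s}(t;w_k)]$ for each $s$. You instead stay at the level of the PGF itself: only the $s=0$ case of \cref{lem:cond_converge} is needed, since $\pgf{n,k}(t;w_k)\in[0,1]$ for $t\in[0,1]$ makes bounded convergence immediate, and the continuity theorem for PGFs of $\mathbb{Z}_{\ge 0}$-valued variables then delivers the Poisson limit. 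This buys you a shorter argument that sidesteps the higher-derivative machinery entirely (for this lemma), and you correctly identify the one interchange the paper's route quietly relies on, namely $\frac{d^y}{dt^y}\EE[\pgf{n,k}(t;w_k)]=\EE[\mpgf{n,k}{y}(t;w_k)]$, which is what part~1 of \cref{lem:cond_converge} is really for. You also supply the derivation of the identity $\EE[t^{\degree{k}}\given W_{n,k}=w_k]=\EE[\pgf{n,k}(t;w_k)]$ from the conditional independence of the edges, which the paper asserts via \cref{lem:pgf_degree} without spelling out. Both arguments are sound; yours is the more self-contained for this particular statement.
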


\begin{proof}
The PGF of $\degree{k}$ is given by
\[
\EE [ t^{\degree{k}} \given W_{n,k}=w_k] 
	= \EE[ \pgf{n,k}(t; w_k)]
	,
	\qquad
	\text{for }
	\vert t \vert \le 1
	.
	\label{eq:dpgf}
\]
Note that these expectations are under the $\sigma$-field generated by $\{ W_k = w_k \}$.
For all $s\geq 0$, we will derive the limit of $\Pr \{\degree{k} = s \given w_k\}$, as $n\to \infty$, which we note is given by the $s$-th order derivatives of the PGF in \cref{eq:dpgf}, evaluated at the argument $t = 0$.
It therefore suffices to show that $\EE [ \mpgf{n,k}{s}(t; w_k)] \to w_k^s \exp\{(t-1) w_k\}$, as $n\to \infty$, for all $s \geq 0$. 
By \cref{lem:cond_converge}, we know that $\mpgf{n,k}{s}(t; w_k)$ is uniformly bounded and that $\mpgf{n,k}{s}(t; w_k) \pto w_k^s \exp\{(t-1)w_k\}$, as $n\to \infty$.
Therefore, by uniform integrability, 
\[
\lim_{n\to\infty} \EE[ \mpgf{n,k}{s}(t; w_k) ] = \EE
	\Big [ 
		\lim_{n\to\infty} \mpgf{n,k}{s}(t; w_k)
		\Big ] 
		= w_k^s \exp\{(t-1)w_k\}
		.
\]
\end{proof}

We are now ready to prove the main theorems in the paper.
%

\begin{proof}[Proof of \cref{result:main}]
We will first verify that, for $y \gg 1$, $\Pr\{ \degree{k}=y\} \to c y^{-1-\discount}$ for every node $k$ and for some constant $c>0$ as $n\to\infty$. 
By \cref{lem:cond_dist}, conditioned on $\{W_k = w_k\}$, the degree $\degree{k}$ converges in distribution to a Poisson random variable with rate $w_k$.
Then by dominated convergence,
\[
\lim_{n\to\infty} \Pr\{ \degree{k} = y \} 
	&= \lim_{n\to\infty} \int_0^{\infty} \Pr\{D_k=y|w_k\} p_n(\dee w_k)
	\nonumber
	\\
	&= \int_0^\infty \frac{w_k^y e^{-w_k}}{y!} 
		p(\dee w_k) 
		\nonumber 
		\\
	&= \frac{\discount\Gamma(y-\discount)}{y!\Gamma(1-\discount)} (1-2^{\discount-y})
	.
\]
By the asymptotics of the Gamma function, for $y \gg 1$, we have
\[
\lim_{n\to\infty} \Pr\{\degree{k} = y\} = c y^{-1-\discount},
\]
for some constant $c$.

Next we show that, for any finite $m$, the collection of random variables $\degree{1}, \dotsc, \degree{m}$ are asymptotically independent, as $n\to \infty$.
We compute the (joint) probability generating function of $(\degree{1}, \dotsc, \degree{m})$, with $\vert t_i \vert\leq 1$ for $i=1,\dotsc, m$.
By \cref{lem:pgf_degree},
\[
\EE \Bigl [ \prod_{i=1}^m t_i^{\degree{i}} \Bigr ] 
	&=
	\EE\Bigl [ \prod_{i=1}^m \prod_{j=i+1}^m 
		\frac{L_n + t_it_j W_{n,i} W_{n,j}}{L_n + W_{n,i} W_{n,j}} 
		\prod_{j=m+1}^n 
			\frac{L_n + t_i W_{n,i} W_{n,j}}{L_n + W_{n,i} W_{n,j}}
		\Bigr ] 
		\nonumber
		\\
	&= \EE \Bigl [ 
			\EE \Bigl [ 
					\prod_{i=1}^m \prod_{j=i+1}^m 
					\frac{L_{n,m+1:n} + \ell_{n,1:m} + t_i t_j w_i W_{n,j}}
					{L_{n,m+1:n} + \ell_{n,1:m} + w_i W_{n,j}} 
		\nonumber \\
	&\qquad \qquad \times 
	\prod_{j=m+1}^n 
	\frac{L_{n,m+1:n} + \ell_{n,1:m} + t_i w_i W_{n,j}}{L_{n,m+1:n} + \ell_{n,1:m} + w_i W_{n,j}}
		\given W_{n,1:m} = w_{1:m} \Bigr ]
		\Bigr ]
		.
\]
Given $w_{1:m}$, by a similar argument as in the proof of \cref{lem:cond_converge}, one can easily show that
\[
\prod_{j=i+1}^m 
\frac{L_{n,m+1:n} + \ell_{n,1:m} + t_i t_j w_i W_{n,j}}{L_{n,m+1:n} + \ell_{n,1:m} + w_i W_{n,j}}
\pto 1,
	\quad
	\text{as } n\to \infty
	,
\]
and
\[
\prod_{j=m+1}^n  \frac{L_{n,m+1:n} + \ell_{n,1:m} + t_i w_i W_{n,j}}{L_{n,m+1:n} + \ell_{n,1:m} + w_i W_{n,j}}
	\pto \exp\{(t_i-1) w_i\}
	,
	\quad
	\text{as }
	n\to \infty
	.
\]
Hence, again by a similar argument as in the proof of \cref{lem:cond_converge}, we have
\[
\lim_{n\to\infty}\EE \Bigl [ 
	\prod_{i=1}^m t_i^{\degree{i}}
	\Bigr ] 
	 = \prod_{i=1}^m \EE[ \exp\{(t_i-1)W_i\} ]
	,
\]

that is, the joint PGF asymptotically factorizes into the product of the PGFs for i.i.d.\ random variables, and the result follows.
\end{proof}

\begin{proof}[Proof of \cref{result:sparsity}]
Using the fact that the expected number of nodes $E_n \defas \sum_{i=1}^n \degree{i}/2$, we may take $t_1= \dots = t_n = \sqrt{t}$ and obtain
\[
\EE [t^{E_n}] 
	= \EE \Bigl [ 
		\prod_{i<j\leq n} \frac{L_n + t W_{n,i} W_{n,j}}{L_n + W_{n,i} W_{n,j}}
	\Bigr ]
	.
\]
We evaluate the derivative of the PGF to obtain the first moment
\[
\EE [E_n] 
	&= \frac{\partial \EE[t^{E_n}]}{\partial t} \Bigr |_{t=1} 
	= \EE \Bigl [ \sum_{i<j\leq n} \frac{W_{n,i} W_{n,j}}{L_n + W_{n,i} W_{n,j}}
		\Bigr ] 
	\leq \frac{1}{2} \EE \Bigl [ 
		\sum_{i \leq j \leq n} \frac{W_{n,i} W_{n,j}}{L_n} 
		\Bigr ] 
		= \frac{n}{2} \EE[W_n]
		.
\]
Since 
\[
\EE[W_n] = \frac{1}{Z_n} 
	\Bigl \{ 
		\frac{\ubound{n}^{1-\discount}}{1-\discount} - \gamma(1-\discount, \ubound{n})
	\Bigr \}
	,
\]
we have
\[
\EE [E_n] = O(n \ubound{n}^{1-\discount})
	.
\]
\end{proof}


\begin{proof}[Proof of \cref{result:grg_scaled_power_law}]
Recall that
\[
\Pr\{X=x \given r\} 
	= \prod_{i < j \le n} 
	\frac{r_{i,j}}{1 + r_{i,j}} = G^{-1}(r) 
	\prod_{i < j \le n} A_{i,j}^{x_{i,j}} 
	\prod_{i=1}^n U_i^{\degree{i}}
	,
\]
where $A \defas (A_{i,j})_{i<j\le n}$ 
and
\[
G(r) \defas \prod_{i < j \le n} (1 + A_{i,j} U_i U_j)
	.
\]
Since $\sum_x \Pr\{X=x\given r\} = 1$, we have
\[
G(r) = \sum_x \prod_{i < j\le n} A_{i,j}^{x_{i,j}} 
	\prod_{i=1}^n u_i^{\degree{i}}.
\]
The joint PGF of $(\degree{1}, \dotsc, \degree{n})$ is then
\[
\EE \Bigl [ 
	\prod_{i=1}^n t_i ^{\degree{i}} \given A, W_{n,1:n}
	\Bigr ] 
	&= \sum_x \Pr\{X=x \given r\} \prod_{i=1}^n t_i^{\degree{i}(x)} 
	\nonumber
	\\
	&= G^{-1}(r) \sum_x \prod_{i<j\le n} A_{i,j}^{x_{i,j}} 
		\prod_{i=1}^n (t_i U_i)^{\degree{i}}
		\nonumber
		\\
	&= \prod_{i < j\le n} \frac{1 + A_{i,j} t_i t_j U_i U_j}{1 + A_{i,j} U_i U_j}
	.
\]
The remainder of the proof follows analogously to the proof of \cref{result:main} above.
\end{proof}


\end{document}